\documentclass{article} % For LaTeX2e
\usepackage[final]{colm2026_conference}

\usepackage{microtype}
\usepackage{hyperref}
\usepackage{url}
\usepackage{booktabs}

\usepackage{lineno}

\usepackage{times}
\usepackage{latexsym}
\usepackage{amsmath}
\usepackage{amsthm}
\usepackage{amssymb}
\usepackage{booktabs} 
\usepackage{graphicx}
\usepackage{subcaption}

\PassOptionsToPackage{table}{xcolor}
\usepackage{colortbl}
\usepackage{multirow}
\usepackage{tcolorbox}
\usepackage{wrapfig}
\usepackage{adjustbox}
\usepackage{enumitem}

\newtheorem{theorem}{Theorem}
\newtheorem{lemma}{Lemma}

\newtheorem{assumption}{Assumption}
\newtheorem{remark}{Remark}

\usepackage{setspace}
\AtBeginDocument{%
  \addtolength\abovedisplayskip{-0.3\baselineskip}%
  \addtolength\belowdisplayskip{-0.3\baselineskip}%
  \addtolength\abovedisplayshortskip{-0.3\baselineskip}%
  \addtolength\belowdisplayshortskip{-0.3\baselineskip}%
}

\usepackage{titlesec}
\titlespacing*{\section}{0pt}{0.15\baselineskip}{0.15\baselineskip}
\titlespacing*{\subsection}{0pt}{0.15\baselineskip}{0.15\baselineskip}

\definecolor{lightred}{RGB}{254, 232, 231}
\definecolor{lightpurple}{RGB}{240, 240, 253}
\definecolor{lightblue}{RGB}{219, 244, 255}
\definecolor{lightorange}{RGB}{255, 243, 229}

\definecolor{darkblue}{rgb}{0, 0, 0.5}
\hypersetup{colorlinks=true, citecolor=darkblue, linkcolor=darkblue, urlcolor=darkblue}

\newcommand{\name}{\textsc{Rims}}
\title{{\name}: Preference Optimization via Smoothed Multi-pair Aggregation for Small-Scale LLM Retrieval-Augmented Generation}

\author{Pei Tian$^{1}$, Zihan Dong$^{2}$, Tianci Liu$^3$, Linjun Zhang$^2$, Haoyu Wang$^4$ \\
$^1$ Department of Computer Science, 
Columbia University\\
$^2$ Department of Statistics, Rutgers University \\
$^3$ School of Electrical and Computer Engineering, Purdue University \\
$^4$ Departiment of Computer Science, SUNY Albany \\
\texttt{pt2632@columbia.edu;hwang28@albany.edu} \\
}

\begin{document}

\ifcolmsubmission
\linenumbers
\fi

\maketitle

\begin{abstract}
Small-scale language models (SLMs) are attractive for retrieval-augmented generation (RAG) in resource-constrained settings, but their limited capacity makes them highly sensitive to noisy or spurious retrieved evidence. Existing preference-based methods such as RoseRAG select only the hardest single preference pair via hard argmin/argmax, discarding the remaining signal; others treat multiple pairs as independent binary comparisons, resulting in low data utilization. We propose {\name}, a three-stage preference optimization framework comprising (1) synthetic chain-of-thought preference data generation via rejection sampling using the target SLM itself without relying on proprietary models, (2) a differentiable soft aggregation mechanism that replaces hard selection with a smooth operator, preserving gradient signal from all preference pairs while retaining the discriminative structure of margin-aware selection, and (3) preference optimization with the smoothed objective applied to multiple alignment algorithms. We theoretically show that the smoothed approximation admits a controllable error bound and that smooth aggregation yields provably tighter gradient alignment to the oracle objective than hard selection.
Experiments on four multi-hop question answering benchmarks show that our approach outperforms state-of-the-art baselines across multiple SLM backbones, achieving consistent gains in Exact Match and F1 under noisy retrieval conditions. Our implementation is available at \url{https://github.com/tptrix29/RIMS}.

\end{abstract}

\section{Introduction}

Large language models (LLMs) \citep{achiam2023gpt, comanici2025gemini, yang2025qwen3, grattafiori2024llama} have achieved strong performance across many natural language tasks by learning vast amounts of linguistic and world knowledge during pretraining. However, their high computational and memory requirements limit practical deployment in resource-constrained environments such as on-device systems or low-infrastructure settings \citep{zhou2024surveyefficientinferencelarge, xu2024ondevicelanguagemodelscomprehensive}. To address this, small-scale language models (SLMs) with dramatically fewer parameters have become a promising alternative due to their lower inference cost, portability, and ease of deployment \citep{belcak2025small, lu2025smalllanguagemodelssurvey, vernikos2024smalllanguagemodelsimprove}. 

Despite their computational efficiency, SLMs inherently cannot internalize as much knowledge as larger models, and their reduced capacity often leads to weaker semantic understanding and generalization on knowledge-intensive tasks \citep{ovadia2024fine}. Retrieval-augmented generation (RAG) \citep{lewis2021retrievalaugmentedgenerationknowledgeintensivenlp,gao2023retrieval,jiang2023active} offers a natural remedy by coupling the model with an external knowledge source, allowing it to answer queries that require up-to-date or domain-specific information beyond its pretrained parameters. Applying RAG to SLMs can substantially enhance performance by offloading factual grounding to an external retrieval component \citep{wang2023selfknowledgeguidedretrievalaugmentation, asai2024selfrag, xu2025simragselfimprovingretrievalaugmentedgeneration}. Yet even with RAG, robust integration of retrieved content remains challenging for small models: their limited capacity makes it difficult to distinguish relevant evidence from distractors, to reason over multiple retrieved passages, and to suppress noisy or contradictory context during generation \citep{yoran2024makingretrievalaugmentedlanguagemodels, liu2025roserag}. This underscores the need for mechanisms that go beyond retrieval quality and directly improve how SLMs learn to utilize imperfect evidence.

Existing methods for improving SLM robustness under RAG fall into three broad categories. \textit{Prompt-based approaches} \citep{wang2025astuteragovercomingimperfect,wang2024blendfilter} instruct the model to disregard irrelevant retrieved content, but their effectiveness hinges on the model's own ability to identify noise, an assumption that often fails for SLMs. \textit{Supervised fine-tuning (SFT)} \citep{yoran2024makingretrievalaugmentedlanguagemodels, wei2025instructraginstructingretrievalaugmentedgeneration} aligns models with curated instruction--response pairs, yet is highly sensitive to noise in retrieved contexts and prone to overfitting on imperfect evidence \citep{chu2025sftmemorizesrlgeneralizes}, particularly for capacity-limited SLMs. A third line of work, \textit{preference optimization} \citep{rafailov2024directpreferenceoptimizationlanguage,xu2024contrastivepreferenceoptimizationpushing,hong2024orpomonolithicpreferenceoptimization,ethayarajh2024kto,meng2024simpo}, offers a more flexible alternative by learning from relative quality signals rather than fixed targets~\citep{liu2025roserag,chen2024noisecontrastivealignmentlanguage}. However, effectively constructing and utilizing preference data under noisy RAG conditions remains an open challenge.

In RAG settings, preference optimization is particularly appealing because it learns from relative quality comparisons rather than treating any single response as ground truth, making it naturally more robust to noisy retrieved contexts than supervised fine-tuning. Moreover, by sampling multiple responses per query through techniques such as rejection sampling with chain-of-thought prompting, one can readily obtain multiple preferred and non-preferred candidates, yielding rich multi-pair preference data for each training instance. Despite the abundance of such supervision, existing methods utilize only a small fraction of it. Concretely, when multiple preference pairs are available for a single query, RoseRAG~\citep{liu2025roserag} selects only the single hardest pair based on the initial model's likelihoods, discarding the remaining signal entirely and fixing the selection throughout training even as the policy evolves. NCA~\citep{chen2024noisecontrastivealignmentlanguage} aggregates multiple negative responses but does not jointly model positive candidates, resulting in an incomplete use of the available preference structure. In both cases, the rich multi-pair preference information is substantially underutilized, motivating the need for a principled mechanism that can fully leverage all available preference pairs during optimization.
% A key practical challenge is that generating synthetic preference data is computationally expensive, yet existing methods utilize only a small fraction of it. Concretely, when multiple preference pairs are available for a single query, RoseRAG \citep{liu2025roserag} selects only the single hardest pair via hard argmin/argmax based on the initial model's likelihoods, discarding the remaining signal entirely and fixing the selection throughout training even as the policy evolves. Noise Contrastive Alignment (NCA) \citep{chen2024noisecontrastivealignmentlanguage} aggregates multiple negative responses but does not jointly model positive candidates, resulting in an incomplete use of the available preference structure. In both cases, the rich multi-pair preference information is substantially underutilized.

% \begin{wrapfigure}{r}{0.6\columnwidth}
% \centering
%   \includegraphics[width=0.58\columnwidth]{figures/RoseRAG_vs_RIMS_robust.pdf}
%   \caption{Qualitative comparison on a multi-hop question. RoseRAG, which relies on hard pair selection, is misled by noisy retrieved passages and produces an incorrect answer. {\name}, using soft aggregation over multiple preference pairs, arrives at the correct answer despite the same noisy context.}
%   \label{fig:roserag_prism_comparison}
% \end{wrapfigure}

To address these limitations, we propose a p\textbf{r}eference opt\textbf{i}mization framework via smoothed \textbf{m}ulti-pair aggregation for \textbf{S}LM retrieval-augmented generation~({\name}). Given multiple preference pairs generated per query through chain-of-thought (CoT)~\citep{wei2023chainofthoughtpromptingelicitsreasoning} rejection sampling, {\name} introduces a differentiable soft aggregation mechanism that replaces hard pair selection with a smooth, temperature-controlled operator. This operator consolidates all positive and negative candidates into a unified pseudo-pair, preserving gradient signal from every candidate while retaining the discriminative structure of margin-aware selection. The smoothed objective is general and can be applied to multiple preference optimization algorithms~\citep{rafailov2024directpreferenceoptimizationlanguage,xu2024contrastivepreferenceoptimizationpushing,hong2024orpomonolithicpreferenceoptimization}. By adaptively aggregating gradient signals across multiple samples of the same question, {\name} improves the model's ability to identify and integrate relevant evidence from noisy retrieval contexts. We further provide theoretical guarantees that the smoothed objective admits a controllable approximation error and that its gradient updates are provably closer to the oracle than those of hard selection.

% \begin{figure*}[t]
% \centering
%   \includegraphics[width=0.6\columnwidth]{figures/RoseRAG_vs_RIMS.pdf}
%   \caption {Qualitative comparison on a multi-hop question. RoseRAG, which relies on hard pair selection, is misled by noisy retrieved passages and produces an incorrect answer. {\name}, using soft aggregation over multiple preference pairs, arrives at the correct answer despite the same noisy context.}
%   \label{fig:roserag_prism_comparison}
% \end{figure*}

Our contributions can be summarized as follows: 
% 1)~We propose {\name}, a preference optimization framework for SLM-based RAG that generates multi-pair CoT preference data via rejection sampling using the target SLM itself, without relying on proprietary models. 2)~We introduce a differentiable soft aggregation mechanism that consolidates multiple preference pairs into a unified, algorithm-agnostic objective, replacing hard pair selection while preserving the discriminative structure of margin-aware optimization. 3)~We provide theoretical analysis showing that the LSE-based smoothing admits a controllable approximation error bound, and that smooth aggregation yields provably tighter gradient alignment to the oracle objective than hard selection under noisy reward estimation. 4)~Extensive experiments on four multi-hop question answering benchmarks with two SLM backbones show that {\name} consistently outperforms state-of-the-art baselines.
\begin{itemize}[leftmargin=1em]
    \item We propose {\name}, a preference optimization framework for SLM-based RAG that generates multi-pair CoT preference data via rejection sampling using the target SLM itself, without relying on proprietary models.
    \item We introduce a differentiable soft aggregation mechanism that consolidates multiple preference pairs into a unified, algorithm-agnostic objective, replacing hard pair selection while preserving the discriminative structure of margin-aware optimization.
    \item We provide theoretical analysis showing that the LSE-based smoothing admits a controllable approximation error bound, and that smooth aggregation yields provably tighter gradient alignment to the oracle objective than hard selection under noisy reward estimation.
    \item Experiments on four multi-hop question answering benchmarks with two SLM backbones show that {\name} consistently outperforms state-of-the-art baselines.
\end{itemize}

\section{Related Work}
Retrieval-augmented generation (RAG) has emerged as an effective paradigm for alleviating the knowledge limitations of language models by grounding generation in external corpora. While early RAG frameworks \citep{izacard2022atlasfewshotlearningretrieval} were primarily designed for large language models (LLMs), recent studies \citep{wang2023selfknowledgeguidedretrievalaugmentation, li2024llatrievalllmverifiedretrievalverifiable, yu2024rankrag} have explored adapting RAG to SLMs to improve efficiency and deployability in resource-constrained settings. However, due to limited representational capacity, SLMs are particularly vulnerable to noisy or irrelevant retrieved contexts, often leading to degraded reasoning and hallucinated outputs \citep{yoran2024makingretrievalaugmentedlanguagemodels}. Prior work like Blendfilter \citep{wang2024blendfilter} and RobustRAG \citep{xiang2024certifiablyrobustragretrieval} has attempted to address this issue through prompt-based filtering, but these approaches rely heavily on the model's intrinsic reasoning ability to identify irrelevant context, which is often insufficient for SLMs. In the meantime, some methods \citep{wei2025instructraginstructingretrievalaugmentedgeneration, yu-etal-2024-chain} investigate training LLMs to handle noisy contexts via supervised finetuning on data synthesized with RAG-retrieved content, but they significantly suffer from low robustness under noisy retrieval conditions.

Preference-based optimization methods \citep{zhang2024knowpoknowledgeawarepreferenceoptimization, dong2024understandllmneedsdual} have gained traction as alternatives to supervised fine-tuning by directly learning from relative preferences, offering improved robustness under noisy RAG settings. However, most approaches rely on single-pair comparisons, underutilizing the richer supervision available from multiple candidate responses, motivating recent work on multi-pair preference optimization. RoseRAG \citep{liu2025roserag} introduces margin-aware preference optimization by selecting the hardest positive and negative pair based on model likelihoods, improving robustness to noisy retrieval but discarding the remaining preference signals. Noise Contrastive Alignment (NCA) \citep{chen2024noisecontrastivealignmentlanguage} instead aggregates multiple negative responses within a contrastive framework, enhancing robustness but neglecting structured relationships among positive candidates. Other recent efforts \citep{xie2024mppomultipairwisepreference, gupta2025ampoactivemultipreferenceoptimization, wang2025singlerewardmultipairmultiperspectivepreference, le2024multireferencepreferenceoptimizationlarge} explore various strategies for leveraging multiple responses per query, including selective subset sampling, multi-perspective preference signals, and multiple reference distributions, yet they generally rely on hard selection or heuristic aggregation that limits full exploitation of structured multi-pair preference data.

In contrast, our framework jointly incorporates multiple preference pairs into a single optimization objective through a differentiable smoothing mechanism, preserving informative signals while maintaining computational efficiency.

\section{Preliminaries}
We briefly review the standard retrieval-augmented generation framework, preference optimization, and margin-aware preference selection, which form the basis of our method.

\subsection{Retrieval-Augmented Generation}
Given a language model parameterized by $\theta$, a knowledge base $\mathcal{K} = \{\mathcal{K}_i\}_{i=1}^{k}$ (where $k$ denotes the number of documents), a retriever $\mathcal{R}(\cdot)$, and a query $q$, the vanilla retrieval-augmented generation (RAG) framework follows a retrieve-then-generate paradigm. Specifically, it first retrieves the top-$K$ relevant documents from the knowledge base and then generates an answer conditioned on the retrieved information. This process can be formulated as
\begin{equation}
\begin{gathered}
\mathcal{K}_q = \mathcal{R}(q, \mathcal{K}; K), \\
y \sim P_\theta\bigl(y \mid \text{Prompt}_{\text{CoT}}(q, \mathcal{K}_q)\bigr)
\end{gathered}
\label{eq:rag}
\end{equation}
where $y$ denotes the generated response and $\text{Prompt}_{\text{CoT}}(\cdot)$ represents the CoT prompt.

\subsection{Preference Optimization}

Preference alignment for language models has traditionally relied on multi-stage procedures, such as reinforcement learning from human feedback (RLHF) \citep{ouyang2022training,bai2022training}, that require an additional reference model to guide and stabilize training. Recent advances have led to a class of methods that learn directly from preference data, avoiding or simplifying explicit reward modeling. These methods generally optimize a policy $\pi_\theta$ over preference pairs $(y_w, y_l)$ given a prompt $x$, where $y_w$ denotes a preferred response and $y_l$ a dispreferred one. A general form of such objectives can be expressed as

\begin{equation}
\mathcal{L}(\theta) = \mathcal{L}_{\text{base}}(\pi_\theta) - \lambda \mathbb{E}_{(x,y_w,y_l)}\left[\log \sigma\!\left(\beta \cdot h_\theta(y_w, y_l, x)\right) \right]
\label{eq:pref_general}
\end{equation}

where $\mathcal{L}_{\text{base}}$ is an optional base objective (e.g., supervised fine-tuning loss), $\sigma(\cdot)$ is the sigmoid function, $\beta$ controls the strength of the preference signal, $\lambda$ weights the preference term, and $h_\theta(y_w, y_l, x)$ is an algorithm-specific scoring function that quantifies the relative quality of $y_w$ over $y_l$. Different instantiations of $h_\theta$ and $\mathcal{L}_{\text{base}}$ give rise to distinct algorithms, including Direct Preference Optimization (DPO) \citep{rafailov2024directpreferenceoptimizationlanguage}, Contrastive Preference Optimization (CPO) \citep{xu2024contrastivepreferenceoptimizationpushing}, and Odds-Ratio Preference Optimization (ORPO) \citep{hong2024orpomonolithicpreferenceoptimization}, which differ in how reference normalization, base supervision, and contrast signals are incorporated.

\subsection{Margin-Aware Preference Optimization}
In RAG settings, preference optimization is particularly appealing because it learns from relative quality comparisons rather than treating any single response as ground truth, making it more robust to noisy retrieved context than supervised fine-tuning. When multiple preference pairs are available for a single query, selecting the most informative pair becomes critical for effective optimization. RoseRAG \citep{liu2025roserag} addresses this with a margin-aware selection strategy that identifies the hardest positive and negative examples based on current model likelihoods:
\begin{equation}
\begin{gathered}
    y_w^* = \arg\min_i \log P_\theta(y_w^i \mid x), \\
    y_l^* = \arg\max_j \log P_\theta(y_l^j \mid x),
\end{gathered}
\label{eq:roserag}
\end{equation}
That is, the preferred response with the lowest model confidence and the non-preferred response with the highest model confidence are selected, forming the most challenging contrast for the current policy. However, this hard argmin/argmax selection retains only a single pair per query, discarding the gradient signal from all remaining candidates.

\section{Method}

\label{sec:method}

We present {\name}, shown in Fig.~\ref{fig:framework}, a framework that improves preference optimization for SLM-based RAG by fully utilizing multi-pair preference data. {\name} consists of three components: (1) \textit{Multi-Pair Preference Data Generation}, which produces diverse chain-of-thought rationales via rejection sampling; (2) \textit{Multi-Pair Preference Data Aggregation}, which consolidates multiple preference pairs into a unified pseudo-pair through a differentiable smoothing operator; and (3) \textit{Preference Optimization}, which applies the smoothed objective to standard alignment algorithms.

% \begin{wrapfigure}{r}{0.6\columnwidth}
% \centering
%   \includegraphics[width=0.6\columnwidth]{figures/Framework.pdf}
%   \caption{Framework of Smoothing Preference Optimization with Multi-Pair Preference Data}
%   \label{fig:framework}
% \end{wrapfigure}

% \begin{figure}[h!]
% \centering
%   \includegraphics[width=1.0\columnwidth]{figures/Framework.pdf}
%   \caption {Framework of Smoothing Preference Optimization with Multi-Pair Preference Data}
%   \label{fig:framework}
% \end{figure}
\subsection{Multi-Pair Preference Data Generation}

% \begin{wrapfigure}{r}{0.6\columnwidth}
% \centering
%   \includegraphics[width=0.6\columnwidth]{figures/Framework.pdf}
%   \caption{Framework of Smoothing Preference Optimization with Multi-Pair Preference Data}
%   \label{fig:framework}
% \end{wrapfigure}

To construct preference data for training, we use the target SLM itself to generate chain-of-thought (CoT) rationales, and apply rejection sampling to filter for correctness. This avoids dependence on proprietary models and produces preference pairs that are well-calibrated to the target model's own capability.

We instantiate this procedure by constructing a multi-turn prompt consisting of a system message, a user message, and an initial assistant message. The system message specifies the task and instructs the model to reason step-by-step, acknowledging that the provided knowledge may contain irrelevant information. The user message provides the retrieved knowledge documents $\mathcal{K}_q$, the question $q$, and the ground-truth answer $a^*$. The assistant message initializes the reasoning process. The complete prompt template is provided in Appendix~\ref{sec:prompt}. Given these inputs, the model generates a rationale $r$ that captures its analytical reasoning.

To ensure quality, we apply rejection sampling~\citep{liu2024statisticalrejectionsamplingimproves, guo2025deepseek}. Given the generated rationale $r$, we extract an answer $a$ and compare it against the ground-truth label $a^*$. If $a$ matches $a^*$, the full response $(r, a)$ is retained as a preferred response $y_w$. The non-preferred response $y_l$ is obtained separately as the vanilla RAG output (Eqn.~\eqref{eq:rag}), which is generated without ground-truth guidance. By repeating this process with different sampling runs, we obtain multiple preferred--non-preferred pairs per query, yielding multi-pair preference data.

\begin{figure}[h!]
\centering
  \includegraphics[width=1.0\columnwidth]{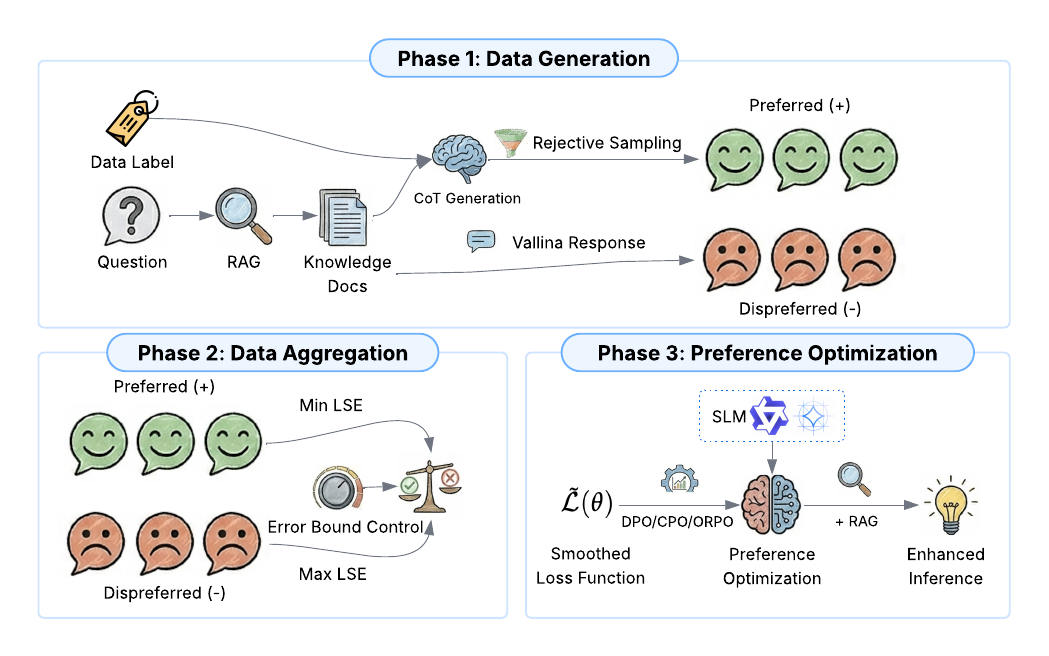}
  \caption {Framework of Smoothing Preference Optimization with Multi-Pair Preference Data}
  \label{fig:framework}
\end{figure}
\subsection{Multi-Pair Preference Data Aggregation}

The margin-aware strategy (Eqn.~\eqref{eq:roserag}) selects the single hardest preference pair via hard argmin/argmax, discarding all remaining candidates. Directly relaxing this selection poses competing challenges: the argmin/argmax operators are non-differentiable, preventing gradient signal from propagating through the selection step; yet naively replacing them with a simple average dilutes the focus on informative hard cases, weakening the discriminative structure that makes margin-aware optimization effective. Moreover, hard selection is performed once based on the initial model's likelihoods and remains fixed throughout training; as the policy evolves, the initially selected pair may no longer represent the most informative contrast. What is needed is a smooth, differentiable operator that preserves the emphasis on hard examples while incorporating signal from all candidates and adapting to the evolving policy.

We address this with the LogSumExp (LSE) function \citep{Nielsen_2016}, which provides a temperature-controlled smooth approximation to the maximum operator:
\begin{equation}
\text{LSE}_\tau(x_1, \cdots, x_n) = \frac{1}{\tau}\log \Big(\sum_{i=1}^n \exp({\tau x_i})\Big)
\label{eq:logsumexp}
\end{equation}
where $\tau > 0$ controls the sharpness of the approximation: as $\tau \to \infty$, the LSE operator converges to the exact maximum; as $\tau \to 0$, it reduces to a uniform average. This provides a principled interpolation between hard selection and soft aggregation. Because the LSE operator is evaluated using the current policy $\pi_\theta$ at each training step, the aggregation weights adapt dynamically as the model evolves, in contrast to the static one-time selection of RoseRAG. A formal analysis of the approximation bound is provided in Section~\ref{sec:theoretical} and Appendix~\ref{sec:logsumexp_err}.

Applying this operator to the policy likelihoods, we aggregate all preferred and non-preferred candidates into a unified pseudo-pair $(x, \tilde{y}_w, \tilde{y}_l)$. For the preferred set, we use the negated LSE to approximate the minimum-likelihood (hardest) positive; for the non-preferred set, we use LSE directly to approximate the maximum-likelihood (hardest) negative:
\begin{equation}
\begin{gathered}
    \log\pi_\theta(\tilde{y}_w|x) = -\text{LSE}_\tau(-\log\pi_\theta(y_w^1 \mid x), \cdots, -\log\pi_\theta(y_w^n \mid x)), \\
    \log\pi_\theta(\tilde{y}_l|x) = \text{LSE}_\tau(\log\pi_\theta(y_l^1 \mid x), \cdots, \log\pi_\theta(y_l^n \mid x)),
\end{gathered}
\label{eq:smooth_roserag}
\end{equation}

\subsection{Preference Optimization}
For a multi-pair preference data tuple $(x, \{y_w^i\}_{i=1}^n, \{y_l^i\}_{i=1}^n)$, where $x$ is the input prompt, $\{y_w^i\}$ is the set of preferred responses, and $\{y_l^i\}$ is the set of non-preferred responses, the smoothed preference optimization objective is:

  \begin{equation}
  \tilde{\mathcal{L}}(\theta) = \mathcal{L}_{\text{base}}(\pi_\theta) - \lambda \mathbb{E}_{(x,y_w,y_l)}\left[\log\sigma\left(\beta \cdot h_\theta(\tilde{y}_w, \tilde{y}_l, x)\right) \right]
  \label{eq:multi_pair_pref_loss}
  \end{equation}

where $\tilde{y}_w$ and $\tilde{y}_l$ are the smoothed pseudo-responses computed via Eqn.~\eqref{eq:smooth_roserag}. This formulation is algorithm-agnostic: for any preference algorithm instantiated through Eqn.~\eqref{eq:pref_general}, the smoothed variant is obtained by replacing the original policy likelihoods $\pi_\theta(y_w|x)$ and $\pi_\theta(y_l|x)$ within $h_\theta$ with their aggregated counterparts $\pi_\theta(\tilde{y}_w|x)$ and $\pi_\theta(\tilde{y}_l|x)$, directly yielding smoothed versions of DPO \citep{rafailov2024directpreferenceoptimizationlanguage}, CPO \citep{xu2024contrastivepreferenceoptimizationpushing}, and ORPO \citep{hong2024orpomonolithicpreferenceoptimization}.

\section{Theoretical Analysis}
\label{sec:theoretical}

\renewcommand{\P}{\mathbb{P}}

In this section, we provide theoretical justifications for the proposed smoothed aggregation mechanism. We first establish a formal error bound showing that the LSE approximation can be controlled via an interpretable hyperparameter (Section~\ref{sec:error}), and then analyze gradient stability to show that smooth aggregation yields provably lower variance than hard selection under candidate-pool sampling (Section~\ref{sec:gradient_alignment}).

\subsection{Approximation Error Analysis}
\label{sec:error}
The smoothed aggregation mechanism introduced in Section~4 relies on the LSE operator as a differentiable proxy for hard max/min selection. A natural question is how tightly this approximation tracks the exact maximum, and whether the approximation error can be controlled in a principled way. We establish a formal error bound and show how it can be used as a practical hyperparameter. Based on the LogSumExp smoothing in Eqn.~\eqref{eq:logsumexp}, we establish the following theorem to control the upper bound of the approximation error:
\begin{theorem}[LSE Error Bound Control]
\label{thm:lse_error_control}
Let $M>0$, $n>1$, and $x_1,\dots,x_n\in \mathbb{R}$. Let $x_{(1)}\le\cdots\le x_{(n)}$ and assume the top-gap
$\delta := x_{(n)}-x_{(n-1)} > 0$, $\epsilon:=\text{LSE}_\tau(x_1,\dots,x_n)-x_{(n)} ~$ for $\tau>0$. If $\tau \;\ge\; F_{\delta,n}(M)
\;:=\;\frac{1}{\delta}\,W\!\Big(\frac{\delta(n-1)}{M}\Big)$,
% \[
% \tau \;\ge\; F_{\delta,n}(M)
% \;:=\;\frac{1}{\delta}\,W\!\Big(\frac{\delta(n-1)}{M}\Big),
% \]
then $\epsilon \le M$. Moreover, for any $M>0$, $\delta>0$, and $n>1$, $F_{\delta,n}(M)\in\mathbb{R}$.
\end{theorem}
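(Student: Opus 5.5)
Factor out the maximum. Writing $x_{(n)}$ for the largest entry,
\[
\epsilon=\text{LSE}_\tau(x)-x_{(n)}=\frac{1}{\tau}\log\Big(1+\sum_{i\neq (n)} e^{-\tau(x_{(n)}-x_i)}\Big).
\]
Each of the $n-1$ non-maximal terms has gap $x_{(n)}-x_i\ge\delta$. Hence
\[
\epsilon\le \frac1\tau\log\big(1+(n-1)e^{-\tau\delta}\big)\le \frac{(n-1)e^{-\tau\delta}}{\tau},
\]
where the last step uses $\log(1+u)\le u$. It therefore suffices to show that $\tau\ge F_{\delta,n}(M)$ implies $g(\tau):=\frac{(n-1)e^{-\tau\delta}}{\tau}\le M$.

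Next, solve the equation $g(\tau)=M$ explicitly using the Lambert $W$ function (principal branch, the inverse of $w\mapsto we^w$ on $[0,\infty)$). The equation $(n-1)e^{-\tau\delta}=M\tau$ is equivalent to
\[
\tau\delta\, e^{\tau\delta}=\frac{\delta(n-1)}{M}.
\]
This gives $\tau\delta=W\big(\delta(n-1)/M\big)$, i.e.\ $\tau^*=F_{\delta,n}(M)$.

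Then use monotonicity. The map $g$ is strictly decreasing on $(0,\infty)$, since both $1/\tau$ and $e^{-\tau\delta}$ are positive and decreasing. Therefore any $\tau\ge\tau^*$ satisfies $g(\tau)\le g(\tau^*)=M$, and so $\epsilon\le M$. An equivalent route avoids $g$: the map $w\mapsto we^w$ is increasing on $[0,\infty)$, so $\tau\delta\ge W(\cdot)$ gives $\tau\delta e^{\tau\delta}\ge \delta(n-1)/M$, which rearranges to the same bound.

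For the final claim, note that $z=\delta(n-1)/M>0$ because $\delta>0$, $n>1$ and $M>0$. The function $we^w$ is a continuous strictly increasing bijection from $[0,\infty)$ onto $[0,\infty)$, so $W(z)$ exists, is unique and real, and in fact $W(z)>0$. Hence $F_{\delta,n}(M)=W(z)/\delta$ is a well-defined positive real number.

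The main point of care is the branch choice. For $z>0$ only the principal branch is real, so no ambiguity arises. The rest is routine; the bound via $\log(1+u)\le u$ is slightly lossy but sufficient for the stated sufficient condition.
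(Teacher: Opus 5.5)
Your proposal is correct and matches the paper's proof: the paper likewise factors out $x_{(n)}$, bounds the remaining sum by $(n-1)e^{-\tau\delta}$, and applies $\log(1+u)\le u$. It then inverts using the strict monotonicity of $y\mapsto ye^{y}$ on $(0,\infty)$ with $y=\tau\delta$ and $a=\delta(n-1)/M$ (your ``equivalent route''), and finishes by noting that the principal branch of $W$ is real at positive arguments; your main variant via the strictly decreasing map $g(\tau)=(n-1)e^{-\tau\delta}/\tau$ is just another packaging of the same monotonicity fact.
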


Given Theorem \ref{thm:lse_error_control}, we can treat the error bound $M$ as a hyperparameter in our smoothing strategy and adaptively control the temperature $\tau$ during training via $F_{\delta,n}(M)$, since the top-gap $\delta$ and smoothing size $n$ are known quantities. The proof is provided in Appendix \ref{sec:logsumexp_err}.

\subsection{Gradient Stability under Candidate-Pool Sampling}
\label{sec:gradient_alignment}

We now analyze stability by treating the sampled candidate pool itself as the source of randomness. For a fixed prompt $x$, one training step draws
\[
Y=(Y_w^1,\dots,Y_w^n,Y_l^1,\dots,Y_l^n)\sim P_x,
\]
and the CPO objective maps this pool to either a hard or a smoothed gradient signal. Hard selection keeps exactly one preferred/non-preferred pair, while smoothing aggregates the full pool through LSE. The key instability of hard selection is that the chosen pair itself is random: when different sampled pools select different hard pairs and these pairs induce different updates, this discrete selector randomness produces an intrinsic variance floor. By contrast, smoothing replaces this one-hot switch by a continuous weighted mixture whose fluctuation can be controlled through the softmax weighting structure.

Define
\[
V_{\mathrm{sm}}(x,\theta)
:=
\mathbb{E}_Y\!\Big[\big\|\nabla_\theta \mathcal{L}_{\mathrm{sm}}(\theta;x,Y)-\mathbb{E}_Y\nabla_\theta \mathcal{L}_{\mathrm{sm}}(\theta;x,Y)\big\|^2\Big],
\]
\[
V_{\mathrm{hard}}(x,\theta)
:=
\mathbb{E}_Y\!\Big[\big\|\nabla_\theta \mathcal{L}_{\mathrm{hard}}(\theta;x,Y)-\mathbb{E}_Y\nabla_\theta \mathcal{L}_{\mathrm{hard}}(\theta;x,Y)\big\|^2\Big].
\]
\begin{theorem}[Smooth Aggregation Improves Gradient Stability]
\label{thm:variance_cpo}
Under Appendix Assumptions~\ref{asm:grad_bound}, \ref{asm:grad_diameter}, and \ref{asm:selector_dominance},
\[
V_{\mathrm{hard}}(x,\theta)>V_{\mathrm{sm}}(x,\theta).
\]
\end{theorem}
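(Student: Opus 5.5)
The assumptions (\ref{asm:grad_bound}, \ref{asm:grad_diameter}, \ref{asm:selector_dominance}) are in the appendix and not visible here, so the plan guesses their content from their names. I take them to say:
\begin{itemize}
\item per-candidate gradients are bounded;
\item the per-pair CPO gradients have bounded diameter;
\item the variance caused by the random choice of the hard pair dominates the smoothed fluctuation bound.
\end{itemize}
The argument compares the two variances through a decomposition that isolates the selector randomness.

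\textbf{Hard-selection variance.} Let $I(Y),J(Y)$ be the hard indices. Write $g_{ij}(Y)$ for the CPO gradient of the pair $(Y_w^i,Y_l^j)$, scaled by the sigmoid factor. Then $\nabla\mathcal{L}_{\mathrm{hard}}=g_{I(Y)J(Y)}(Y)$. Conditioning on the selected indices $S=(I,J)$, the law of total variance gives
\[
V_{\mathrm{hard}}\ \ge\ \mathrm{Var}_S\big(\mathbb{E}[g_{IJ}\mid S]\big)=:V_{\mathrm{sel}}.
\]
This is the intrinsic floor coming from the one-hot switch. It is positive whenever different selections induce different mean updates, which I expect the selector-dominance assumption to guarantee quantitatively (e.g.\ $V_{\mathrm{sel}}\ge c_0$).

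\textbf{Smoothed variance.} For the smoothed objective, differentiating Eqn.~\eqref{eq:smooth_roserag} shows that $\nabla_\theta\log\pi_\theta(\tilde y_w|x)=\sum_i p_i\nabla\log\pi_\theta(y_w^i|x)$. Here $p$ is the softmax of $-\tau\log\pi_\theta(y_w^i|x)$. Similarly, the negative side uses weights $q_j$. Hence
\[
\nabla\mathcal{L}_{\mathrm{sm}}=\sum_{i,j}p_iq_j\,g_{ij}(Y)
\]
plus a base-loss term, i.e.\ a convex combination of per-pair gradients.

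I would bound its variance by $\mathbb{E}\|\nabla\mathcal{L}_{\mathrm{sm}}-c\|^2$ for a well-chosen constant $c$ (e.g.\ the centroid of the gradient set). The bounded-diameter and bounded-gradient assumptions then give a bound of the form $V_{\mathrm{sm}}\le D^2\cdot\mathbb{E}\|w(Y)-\bar w\|^2+(\text{within-pair term})$. This uses the fact that a convex combination moves continuously with its weights. The softmax structure (weights Lipschitz in the log-likelihoods with constant $\tau$) controls the fluctuation of $w(Y)$.

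\textbf{Comparison.} The within-pair term appears in both variances, since the same per-pair randomness enters $g_{IJ}$. The conclusion then follows by combining the two bounds and invoking \ref{asm:selector_dominance}: the one-hot selector fluctuation strictly exceeds the controlled soft-weight fluctuation, giving $V_{\mathrm{hard}}\ge V_{\mathrm{sel}}+(\cdot)>V_{\mathrm{sm}}$.

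\textbf{Main obstacle.} The hardest step is making the two bounds commensurable. The hard gradient's within-selection variance and the smooth gradient's variance must be bounded by a common quantity, so that the selector term is the only difference. I would first try an explicit coupling: write $\nabla\mathcal{L}_{\mathrm{hard}}=\sum e_{IJ}g_{ij}$ with one-hot weights, and compare $\mathbb{E}\|e(Y)-\mathbb{E}e\|^2$ to $\mathbb{E}\|w(Y)-\mathbb{E}w\|^2$. One-hot vectors have maximal spread among distributions with the same mean support, and I expect the dominance assumption to encode exactly this inequality in gradient units.
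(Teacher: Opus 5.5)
\textbf{Verdict: there is a genuine gap, on the smoothed side.} Your hard-selection step is correct and matches the paper. The law of total variance over the selector gives $V_{\mathrm{hard}}\ge\operatorname{Var}(\mathbb{E}[g_{\mathrm{hard}}\mid I])=\frac12\sum_{i,j}p_ip_j\|\mu_i-\mu_j\|^2\ge p^2m^2$, where $p,m$ are the two-index selector-competition constants in Assumption~\ref{asm:selector_dominance}.

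\textbf{Your formula for the smoothed gradient is wrong.} $\nabla_\theta\mathcal{L}_{\mathrm{sm}}$ is not the convex combination $\sum_{i,j}q_{w,i}q_{l,j}\,g_{ij}$ of per-pair CPO gradients. Differentiating $\phi(\Delta_{\mathrm{sm}})$ gives $\beta\,\phi'(\Delta_{\mathrm{sm}}(Y))\bigl(\sum_i q_{w,i}u_i-\sum_j q_{l,j}v_j\bigr)$. This is one sigmoid factor, evaluated at the smoothed margin, multiplying a convex combination of the unscaled differences $u_i-v_j$. That margin even satisfies $\Delta_{\mathrm{sm}}\le\min_{i,j}\Delta_{ij}$ for the pair margins $\Delta_{ij}:=\beta(s_i^w-s_j^l)$, since $\operatorname{LSE}_\tau$ upper-bounds the max. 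So there is no common family $\{g_{ij}\}$ on which the hard gradient is the one-hot combination and the smoothed gradient the soft one. The coupling you propose for the ``main obstacle'' therefore does not exist as written.

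\textbf{The comparison you aim for is also unavailable.} This holds even after fixing the prefactor.
\begin{itemize}
\item The term $\mathbb{E}[\operatorname{Var}(g_{\mathrm{hard}}\mid I)]$ concerns only the selected pair, conditional on being selected. The smoothed gradient mixes every candidate, including unselected ones whose gradients may fluctuate far more, so the two ``within-pair'' terms do not cancel.
\item ``One-hot vectors have maximal spread'' only compares random probability vectors with the same mean. $e(Y)$ and the softmax weights do not share a mean: a deterministic selector gives $e(Y)$ zero spread while the soft weights still move.
\item Your final step rests on a guessed form of Assumption~\ref{asm:selector_dominance} that is not the one stated.
\end{itemize}

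\textbf{The missing idea: no commensurability is needed.} Discard the within-selection part of $V_{\mathrm{hard}}$ entirely. Then compare $g_{\mathrm{sm}}(Y)$ pointwise with the uniform-weight gradient carrying the \emph{same} prefactor, $\bar g(Y)=\beta\,\phi'(\Delta_{\mathrm{sm}}(Y))\bigl(\bar u(Y)-\bar v(Y)\bigr)$. Use three ingredients:
\begin{itemize}
\item $|\phi'|\le1$;
\item the within-pool diameters $D_w,D_l$ of the candidate score gradients $u_i,v_i$ (Assumption~\ref{asm:grad_diameter}; it concerns $u_i,v_i$, not per-pair gradients);
\item the total-variation bound for observables of bounded oscillation.
\end{itemize}
Together these give $\|g_{\mathrm{sm}}(Y)-\bar g(Y)\|\le\beta\bigl(D_w\operatorname{TV}(q_w,\mathrm{Unif}_n)+D_l\operatorname{TV}(q_l,\mathrm{Unif}_n)\bigr)$ for every $Y$. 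The TV terms are controlled by the within-pool score ranges $R_w,R_l$ of Assumption~\ref{asm:grad_bound}, which bound weight ratios by $e^{\tau R}$. This replaces your softmax-Lipschitz argument across pools. Then
\[
V_{\mathrm{sm}}\le 2\operatorname{Var}(\bar g(Y))+2\,\mathbb{E}\|g_{\mathrm{sm}}(Y)-\bar g(Y)\|^2\le 2\operatorname{Var}(\bar g(Y))+2\beta^2\bigl(D_w\,\delta_n(\tau,R_w)+D_l\,\delta_n(\tau,R_l)\bigr)^2,
\]
and Assumption~\ref{asm:selector_dominance} states verbatim that $p^2m^2$ exceeds this right-hand side.

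\textbf{Caution if you follow the paper at this point.} Its intermediate claim $q_i\ge 1/(e^{\tau R}+n-1)$ is false for $n\ge3$; the correct bound is $1/(1+(n-1)e^{\tau R})$. For $n\ge4$ the stated $\operatorname{TV}(q,\mathrm{Unif}_n)\le\delta_n(\tau,R)$ can itself fail. For example, $n=4$, $e^{\tau R}=2$, $q=(\tfrac16,\tfrac16,\tfrac13,\tfrac13)$ gives $\operatorname{TV}=\tfrac16>\tfrac{3}{20}=\delta_4(\tau,R)$. So a valid TV bound must replace $\delta_n$ there, with Assumption~\ref{asm:selector_dominance} adjusted accordingly.
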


Theorem~\ref{thm:variance_cpo} reduces the stability comparison to an interpretable tradeoff between two quantities. The term $p^2m^2$ is the selector-mixture variance floor induced by hard selection: it becomes large when the hard selector places non-trivial mass on multiple pair indices whose average updates are well separated. The right-hand side captures the fluctuation of smoothing: $\operatorname{Var}(\bar g(Y))$ is the branchwise averaging baseline, while the second term is the additional variance induced by soft weighting. Crucially, this extra term is multiplied by the shrinkage factor $\delta_n(\tau,R)$, which vanishes as the branch score range collapses. Hence, whenever hard selector competition is strong but the softmax weights remain close to a stable average, smooth aggregation yields strictly smaller gradient variance than hard selection. The full notation is given in Appendix~\ref{sec:appendix_notation}, and the supporting lemmas and proof appear in Appendix~\ref{sec:gradient_stability_proof}.
\section{Experiments}
\label{sec:experiments}

In this section, we evaluate the proposed {\name} by addressing the following research questions: 
\begin{enumerate}[leftmargin=3em]
    \item[RQ1]  How does {\name} perform compared to state-of-the-art baselines?
    \item[RQ2]  How does the proposed smoothed aggregation compare with alternative strategies?
    \item[RQ3]  How does {\name} perform across different preference optimization methods?
    \item[RQ4]  How does {\name} perform with varying numbers of retrieved documents?
    \item[RQ5]  How sensitive is {\name} to the hyperparameters error bound $M$ and smoothing size $n$?
\end{enumerate}
% \emph{RQ1)} How does {\name} perform compared to state-of-the-art baselines? \emph{RQ2)} How does the proposed smoothed aggregation compare with alternative strategies? \emph{RQ3)} How does {\name} perform across different preference optimization methods? \emph{RQ4)} How does {\name} perform with varying numbers of retrieved documents? \emph{RQ5)} How sensitive is {\name} to the hyperparameters error bound $M$ and smoothing size $n$?

\subsection{Datasets and Experiment Settings} 

\noindent\textbf{Datasets and Evaluation Metrics.} 
We conduct experiments on four public multi-hop question answering benchmarks: 
HotPotQA~(HPQA, \cite{yang-etal-2018-hotpotqa}), 
2WikiMultiHopQA~(2WMHQA, \cite{ho2020constructingmultihopqadataset}), 
MultiHopRAGQA~(MHRAGQA, \cite{tang2024multihoprag}), and
MuSiQueQA~(MSQQA, \cite{trivedi2022musiquemultihopquestionssinglehop}). 
Following prior work~\citep{shao2023enhancing,wang2024blendfilter,liu2025roserag}, 
we evaluate the first 500 questions from the development splits of HPQA and 2WMHQA.
For MSQQA, we select the first 500 questions from its development set.
As MHRAGQA lacks an official development split, we randomly set aside 20\% of the training data for evaluation and use the remaining 80\% for training,
with the final results also computed over 500 questions.
We employ exact match (EM) and F1 score as evaluation metrics, following \citet{liu2025roserag}.

\noindent\textbf{Baselines.} 
We compare our method against the following state-of-the-art baselines: 
1) CoT Prompting~\citep{wei2023chainofthoughtpromptingelicitsreasoning}, 
2) ReAct~\citep{yao2023reactsynergizingreasoningacting}, 
3) SelfAsk~\citep{press-etal-2023-measuring}, 
4) BlendFilter~\citep{wang2024blendfilter}, 
5) InstructRAG~\citep{wei2025instructraginstructingretrievalaugmentedgeneration},
6) RoseRAG~\citep{liu2025roserag}. 
Further background and details are deferred to Appendix \ref{sec:baseline}.

\noindent\textbf{Implementation Details.} 
Our experiments are conducted using two representative SLMs: Qwen2.5-1.5B-Instruct \citep{qwen2.5} and 
Gemma-2-2b-it \citep{team2024gemma}. 
For the retriever, we utilize the ColBERTv2 retriever~\citep{santhanam2022colbert}, a state-of-the-art efficient retrieval method implemented by \cite{khattab2023demonstrate}. 
The knowledge base we employ is the 2017 Wikipedia abstracts dump~\citep{khattab2023dspy}.
More implementation details can be found in Appendix \ref{sec:impl}.

\subsection{Overall Performance Comparison}

We evaluate the effectiveness of {\name} across four benchmarks against six strong baseline methods to answer RQ1.
Results for Qwen2.5-1.5B-Instruct and Gemma-2-2B-it are reported in Table \ref{tab:main_prfm_compare}.

{\name} consistently outperforms strong baselines across all benchmarks by a substantial margin.
The gains are especially pronounced on the more challenging multi-hop datasets (2WMHQA and MHRAGQA), where retrieved passages tend to be lengthy and noisy~\citep{liu2025roserag}, indicating that {\name} is effective at aggregating diverse evidence while maintaining reasoning consistency.
Prompt-based and decomposition-based methods (ReAct, SelfAsk, BlendFilter) are limited by SLMs’ intrinsic reasoning capacity, while fine-tuning-based approaches that explicitly enhance robustness to noisy contexts show stronger performance.
Overall, the superior and stable performance of {\name} highlights the importance of designing optimization strategies specifically for SLMs rather than directly transferring techniques developed for large-scale models.
\begin{table*}[t]
\centering
\caption{Performance comparison of different methods with Qwen2.5-1.5B-Instruct and gemma-2-2b-it as the backbone models. 
The best and second-best results are highlighted in \textbf{bold} and \underline{underlined}, respectively.
(HPQA: HotPotQA, 2WMHQA: 2WikiMultiHopQA, MHRAGQA: MultiHopRAGQA, MSQQA: MuSiQueQA)}
\label{tab:main_prfm_compare}
\small
\setlength{\tabcolsep}{2pt}
\resizebox{0.95\textwidth}{!}{%
\begin{tabular}{
>{\bfseries}l
>{\columncolor{lightred}}c
>{\columncolor{lightred}}c
>{\columncolor{lightpurple}}c
>{\columncolor{lightpurple}}c
>{\columncolor{lightorange}}c
>{\columncolor{lightorange}}c
>{\columncolor{lightblue}}c
>{\columncolor{lightblue}}c
% |
>{\columncolor{lightred}}c
>{\columncolor{lightred}}c
>{\columncolor{lightpurple}}c
>{\columncolor{lightpurple}}c
>{\columncolor{lightorange}}c
>{\columncolor{lightorange}}c
>{\columncolor{lightblue}}c
>{\columncolor{lightblue}}c
}
\toprule[0.3ex]
\multirow{3}{*}{\bf Method}
& \multicolumn{8}{c}{\bf Qwen2.5-1.5B-Instruct}
& \multicolumn{8}{c}{\bf Gemma-2-2b-it} \\
\cmidrule(lr){2-9} \cmidrule(lr){10-17}
& \multicolumn{2}{>{\columncolor{lightred}}c}{\bf HPQA}
& \multicolumn{2}{>{\columncolor{lightpurple}}c}{\bf 2WMHQA}
& \multicolumn{2}{>{\columncolor{lightorange}}c}{\bf MHRAGQA}
& \multicolumn{2}{>{\columncolor{lightblue}}c}{\bf MSQQA}
& \multicolumn{2}{>{\columncolor{lightred}}c}{\bf HPQA}
& \multicolumn{2}{>{\columncolor{lightpurple}}c}{\bf 2WMHQA}
& \multicolumn{2}{>{\columncolor{lightorange}}c}{\bf MHRAGQA}
& \multicolumn{2}{>{\columncolor{lightblue}}c}{\bf MSQQA} \\
\cmidrule(lr){2-3} \cmidrule(lr){4-5} \cmidrule(lr){6-7} \cmidrule(lr){8-9}
\cmidrule(lr){10-11} \cmidrule(lr){12-13} \cmidrule(lr){14-15} \cmidrule(lr){16-17}
& EM & F1 & EM & F1 & EM & F1 & EM & F1
& EM & F1 & EM & F1 & EM & F1 & EM & F1 \\
\midrule
CoT
& 26.6 & 38.1 & 17.4 & 22.9 & 3.8 & 4.8 & 3.0 & 9.6
& 37.2 & 48.4 & 21.6 & 26.8 & 19.8 & 20.6 & 6.8 & 15.1 \\
vanilla RAG
& 29.0 & 40.6 & 15.2 & 23.32 & 7.4 & 8.39 & 7.6 & 15.7
& 39.2 & 50.2 & 19.4 & 24.9 & 15.6 & 16.1 & 9.6 & 17.3 \\
ReAct
& 12.0 & 22.7 & 6.6 & 20.9 & 2.8 & 3.6 & 3.6 & 10.5
& 34.8 & 47.3 & 26.4 & 33.6 & 23.8 & 24.6 & \textbf{13.0} & \underline{23.1} \\
SelfAsk
& 22.2 & 33.9 & 18.4 & 24.2 & 15.0 & 16.1 & 3.0 & 11.4
& 32.0 & 42.4 & 28.4 & 34.3 & 29.8 & 30.2 & 9.6 & 16.8 \\
BlendFilter
& 25.0 & 33.9 & 12.6 & 19.1 & 4.0 & 5.0 & 3.2 & 9.9
& 39.8 & 50.6 & 23.0 & 28.9 & 24.4 & 26.1 & 9.4 & 17.1 \\
InstructRAG
& 15.2 & 23.5 & 17.8 & 23.0 & 16.0 & 16.2 & 2.2 & 7.8
& 28.4 & 37.2 & 16.0 & 20.4 & 15.2 & 15.7 & 3.2 & 7.0 \\
RoseRAG
& \underline{36.6} & \underline{46.0} & \underline{29.6} & \underline{33.7} & \underline{45.4} & \underline{46.3} & \underline{9.4} & \underline{18.6}
& \underline{41.4} & \underline{52.6} & \underline{32.4} & \underline{36.9} & \underline{58.2} & \underline{58.5} & \underline{12.4} & 21.6 \\
\midrule
\textbf{\name}
& \textbf{39.0} & \textbf{49.3} & \textbf{32.2} & \textbf{37.7} & \textbf{52.6} & \textbf{53.0} & \textbf{10.8} & \textbf{18.6}
& \textbf{42.6} & \textbf{53.0} & \textbf{35.8} & \textbf{40.0} & \textbf{59.0} & \textbf{59.2} & \underline{12.4} & \textbf{23.4} \\
\bottomrule[0.3ex]
\end{tabular}}
\end{table*}

\subsection{Effectiveness of Data Aggregation Methods}
\begin{table}[t]
\centering
\caption {
Performance (EM/F1) comparison of {\name} with different aggregation strategies. 
Experiments are conducted on the MHRAGQA using Qwen2.5-1.5B-Instruct as the backbone model. (\textit{Weighted}: weight training data based on untrained model's likelihood;
\textit{Average}: assign equal weights. Detailed explanations of these strategies are provided in Appendix Table~\ref{tab:agg_strategies}.}
\label{tab:agg_ablation}
% \small
\setlength{\tabcolsep}{4pt}
\begin{tabular}{cc cc cc}
\toprule[0.3ex]
\multicolumn{2}{c}{\textbf{Aggregation Strategy}} & \multicolumn{2}{c}{\textbf{Smoothing size = 3}} & \multicolumn{2}{c}{\textbf{Smoothing size = 4}} \\
\midrule
\textbf{Positive} & \textbf{Negative} & \textbf{EM} & \textbf{F1} & \textbf{EM} & \textbf{F1} \\
\cmidrule{1-2}\cmidrule(l){3-4}\cmidrule(l){5-6}
Average & Average   & 48.0 & 48.5 & 50.6 & 51.2 \\
Average & Weighted  & 47.0 & 47.4 & 43.4 & 44.3 \\
Weighted & Average  & 45.2 & 46.0 & 48.8 & 49.1 \\
Weighted & Weighted & 48.2 & 48.7 & 47.4 & 47.8 \\
\midrule
\multicolumn{2}{c}{\name-DPO} & \textbf{50.80} & \textbf{51.43} & \textbf{52.40} & \textbf{52.98} \\
\bottomrule[0.3ex]
\end{tabular}
\end{table}

To answer RQ2, we compare our proposed smoothed aggregation with two alternative strategies for combining multi-pair preference data $(x, y_w^1, \cdots, y_w^n, y_l^1, \cdots, y_l^n)$ into a pseudo-pair:
\emph{Weighted} (fixed weights based on base model likelihoods),
\emph{Average} (equal weights).
More details can be found in Appendix \ref{sec:agg_method}.

Results are reported in Table~\ref{tab:agg_ablation}, using Qwen2.5-1.5B-Instruct as the backbone.
Across two smoothing sizes, all heuristic-based methods exhibit significantly worse performance than our proposed {\name}.
Average and Weighted aggregation dilute informative signals by mixing responses of heterogeneous quality.
In contrast, our proposed {\name} consistently achieves the best performance and remains stable across different smoothing sizes, confirming that LSE smoothing effectively balances the focus on hard examples with the utilization of all candidates.

\subsection{Effectiveness Across Different Preference Optimization Methods}

\begin{figure*}[t]
\includegraphics[width=\columnwidth]{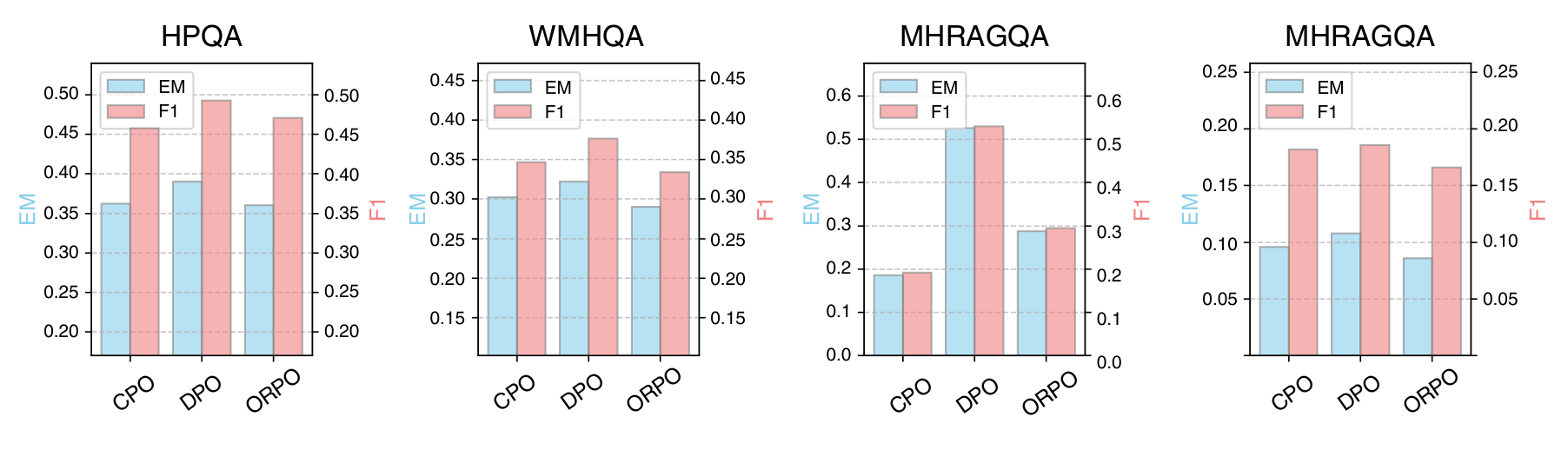}
\caption {
Performance (EM/F1) of {\name} when using different preference optimization methods.
The backbone model is Qwen2.5-1.5B-Instruct.
}
\label{fig:result_algo_comparison}
\end{figure*}

Figure~\ref{fig:result_algo_comparison} reports the performance of three preference optimization methods (DPO, CPO, ORPO) with a fixed smoothing component to answer RQ3.
{\name} consistently achieves strong and stable performance across all three methods, demonstrating the robustness of the proposed smoothing mechanism.
Among them, DPO yields the highest EM and F1 scores, with the gap being especially pronounced on more challenging multi-hop datasets such as MHRAGQA, suggesting that DPO's reference-policy regularization is particularly beneficial for capturing complex reasoning chains under noisy retrieval.

\subsection{Effectiveness with a Varying Number of Retrieved Documents}

Figure~\ref{fig:rag_count} compares {\name} and vanilla RAG on HPQA, with the number of retrieved documents $K$ varying from 0 to 10 to answer RQ4.
When $K=0$, the backbone relies solely on its internal knowledge; {\name} demonstrates comparable performance to vanilla RAG, indicating that the optimization did not degrade the model's intrinsic reasoning abilities and pretrained knowledge.

As $K$ increases from 1 to 5, both methods benefit from improved retrieval recall, but {\name} exhibits much steadier improvement while vanilla RAG shows only marginal gains beyond $K=1$.
When $K > 5$, vanilla RAG performance begins to degrade due to increased noise, whereas {\name} maintains its superior performance and shows greater resilience.
These results demonstrate that {\name} provides more robust evidence utilization than vanilla RAG, enabling consistent performance gains even as the retrieved context grows noisier.

\begin{figure*}[htbp]
\centering
\begin{subfigure}[t]{0.3\textwidth}
\centering
\includegraphics[height=3.6cm]{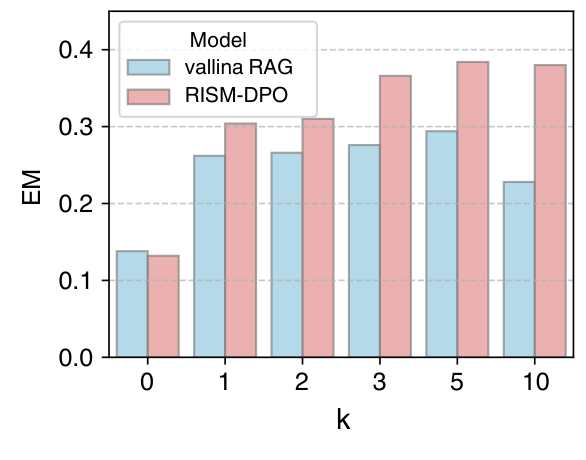}
\caption{}
\label{fig:rag_count}
\end{subfigure}\hfill
\begin{subfigure}[t]{0.35\textwidth}
\centering
\includegraphics[height=3.6cm]{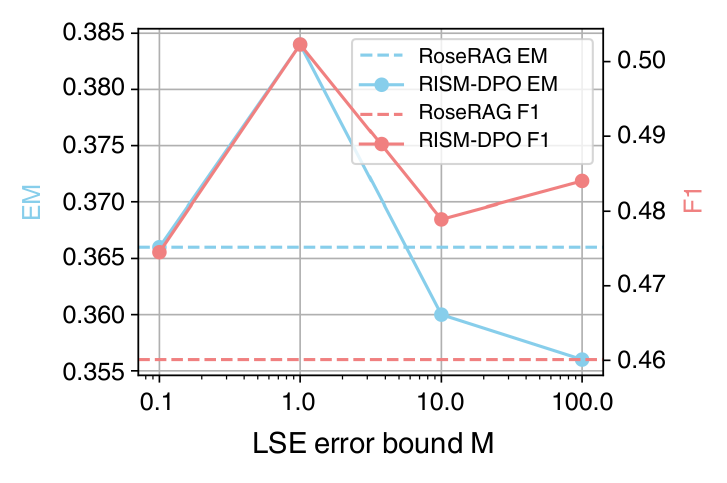}
\caption{}
\label{fig:error_bound}
\end{subfigure}\hfill
\begin{subfigure}[t]{0.3\textwidth}
\centering
\includegraphics[height=3.6cm]{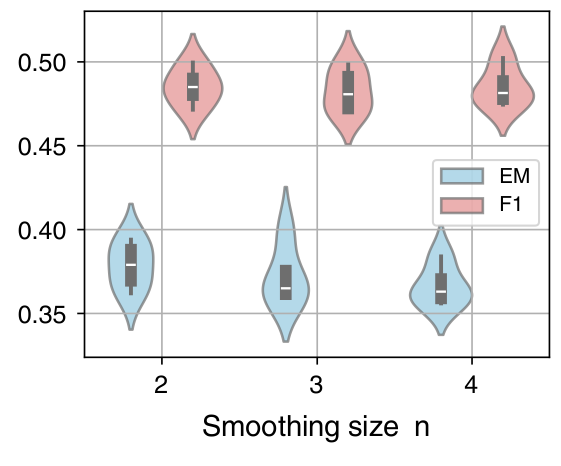}
\caption{}
\label{fig:smooth_size}
\end{subfigure}
\caption{Sensitivity analysis of {\name} across different hyperparameter settings. Experiments are conducted on the HPQA dataset using Qwen2.5-1.5B-Instruct as the backbone model. (a) EM under varying retrieved document counts. (b) EM/F1 under varying LSE approximation error bound $M$ with smoothing size $n=4$. (c) EM/F1 under varying smoothing size $n$; inner box shows the interquartile range across 4 replicates and outer violin shows the density.}
\label{fig:hyperparam}
\end{figure*}

\subsection{Sensitivity Analysis of Hyperparameters}

\textbf{LSE Approximation Error Analysis.}
The error bound $M$ in Theorem~\ref{thm:lse_error_control} controls how closely the LSE operator approximates the exact max/min, with the temperature $\tau$ adaptively adjusted via $F_{\delta,n}(M)$ during training.
Figure~\ref{fig:error_bound} shows the impact of $M$ on model performance.
When $M$ is small, {\name} achieves performance comparable to RoseRAG, as the LSE operator degenerates to hard selection, validating our theoretical analysis.
Performance peaks at $M=1$, suggesting that a moderately relaxed bound enables the model to leverage a richer set of preference pairs while preserving the discriminative structure of margin-aware selection.

\textbf{Smoothing Size.}
The smoothing size $n$ controls the number of candidate responses aggregated in the LSE operator.
Figure~\ref{fig:smooth_size} shows the performance under different smoothing sizes with a fixed error bound $M$, where $\tau$ is adaptively adjusted following Theorem~\ref{thm:lse_error_control}.
Both EM and F1 scores exhibit a clear degradation as $n$ increases, even when the worst-case approximation error is controlled.
This indicates that larger smoothing sizes dilute the influence of top-ranked candidates by aggregating over a broader spectrum of scores; while controlling $M$ guarantees closeness to the true max in magnitude, increasing $n$ alters the relative contributions of subdominant candidates.
Consequently, moderate smoothing sizes achieve a better balance between optimization smoothness and ranking fidelity.

\section{Conclusions}
\label{sec:conclusion}
% We presented {\name}, a preference optimization framework that addresses the underutilization of multi-pair preference data in SLM-based RAG. Rather than selecting only the single hardest preference pair via hard argmin/argmax, {\name} introduces a differentiable LSE-based smoothed aggregation that consolidates all preferred and non-preferred candidates into a unified pseudo-pair. This soft aggregation preserves gradient signal from every candidate while retaining the discriminative structure of margin-aware selection, and dynamically adapts the aggregation weights as the policy evolves during training. The formulation is algorithm-agnostic, directly extending DPO, CPO, and ORPO without modification to the underlying optimization procedures. We further provided a theoretical analysis bounding the approximation error introduced by LSE smoothing, translating the error control into an explicit and interpretable hyperparameter that enables principled trade-off between hard selection and smooth aggregation. Extensive experiments on four multi-hop question answering benchmarks with two SLM backbones demonstrate that {\name} consistently outperforms state-of-the-art baselines in both Exact Match and F1, confirming that structured multi-pair aggregation is critical for robust evidence utilization under noisy retrieval conditions.
We presented {\name}, a preference optimization framework that addresses the underutilization of multi-pair preference data in SLM-based RAG. The core idea is to replace hard pair selection with a differentiable LSE-based smoothing mechanism that aggregates all preference candidates into a unified objective, preserving informative gradient signals while retaining the discriminative structure of margin-aware optimization. The resulting formulation is algorithm-agnostic, directly applicable to DPO, CPO, and ORPO without modification. We further provided theoretical guarantees showing that the LSE approximation error is controllable via an interpretable hyperparameter, and that smooth aggregation yields provably tighter gradient alignment than hard selection. Extensive experiments on four multi-hop question answering benchmarks with two representative SLM backbones demonstrate that {\name} consistently outperforms state-of-the-art baselines in both Exact Match and F1, highlighting the importance of structured multi-pair aggregation for robust evidence utilization under noisy retrieval conditions.

\bibliography{colm2026_conference}
\bibliographystyle{colm2026_conference}

\appendix

% ---------- Preliminaries ----------
\newcommand{\LSE}{\operatorname{LSE}}
\newcommand{\R}{\mathbb{R}}

\section{Omitted Proofs}
\label{sec:omitted_proofs}

This section collects all proofs omitted from the main text. Appendix~\ref{sec:logsumexp_err} proves the approximation-error result used in the main text. Appendix~\ref{sec:appendix_notation} then switches to the CPO setting and analyzes gradient stability under candidate-pool sampling. The two analyses are independent: the first controls LSE approximation error, while the second explains why smoothing produces a more stable training signal than hard pair selection.

%----------------------------------------------------------------------
\subsection{LogSumExp Error Bound Analysis}
\label{sec:logsumexp_err}
%----------------------------------------------------------------------

\begin{lemma}[LSE Error Bound]
\label{lem:lse_gap_bound}
Let $n>1$, $\tau>0$, and $x_1,\dots,x_n\in\R$. Let $x_{(1)}\le \cdots \le x_{(n)}$ be the order statistics and define
\[
\LSE_\tau(x_1,\dots,x_n)\;:=\;\frac1\tau\log\Big(\sum_{i=1}^n e^{\tau x_i}\Big).
\]
If $\delta:=x_{(n)}-x_{(n-1)}\ge 0$, then the error
$\epsilon \;:=\;\LSE_\tau(x_1,\dots,x_n)-x_{(n)}$
satisfies
\[
\epsilon \le \frac{1}{\tau} \log n 
\text{ and } 
\epsilon \le \frac{n-1}{\tau}e^{-\tau\delta} 
\]
\end{lemma}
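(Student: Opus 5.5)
The plan is to factor out the largest term inside the logarithm. Writing $x_{(n)}$ for the maximum, we have
\[
\epsilon \;=\; \frac1\tau\log\Big(\sum_{i=1}^n e^{\tau(x_i-x_{(n)})}\Big)
\;=\; \frac1\tau\log\Big(1+\sum_{i=1}^{n-1} e^{-\tau(x_{(n)}-x_{(i)})}\Big).
\]
Here the terms are re-indexed by order statistics, and the summand attaining the maximum contributes exactly $1$. Both bounds in Lemma~\ref{lem:lse_gap_bound} should follow from this one identity by bounding the remaining sum $S:=\sum_{i=1}^{n-1} e^{-\tau(x_{(n)}-x_{(i)})}$ in two different ways. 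If the maximum is attained more than once, only one copy is factored out as the leading $1$ and the others stay in $S$; the identity is unaffected.

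For the first bound, each exponent $-\tau(x_{(n)}-x_{(i)})$ is nonpositive, so each summand is at most $1$. Hence $S\le n-1$, and therefore
\[
\epsilon\le \tfrac1\tau\log n .
\]

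For the second bound, use the ordering: for every $i\le n-1$ we have $x_{(n)}-x_{(i)}\ge x_{(n)}-x_{(n-1)}=\delta$. This gives $S\le (n-1)e^{-\tau\delta}$. Then the elementary inequality $\log(1+u)\le u$ for $u\ge 0$ yields
\[
\epsilon\le \frac{n-1}{\tau}e^{-\tau\delta}.
\]
This step also covers $\delta=0$, where the bound simply reads $(n-1)/\tau$. That value is weaker than $\log n/\tau$, but the claim remains valid.

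No step is a genuine obstacle. The only points needing care are the bookkeeping when the maximum is tied, and confirming that $\epsilon\ge 0$, which holds because the argument of the logarithm is at least $1$. That nonnegativity is not needed for the stated upper bounds.
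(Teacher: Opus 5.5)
Your proposal is correct and follows essentially the same route as the paper's proof. Both factor $e^{\tau x_{(n)}}$ out of the sum to obtain $\epsilon=\frac1\tau\log\bigl(1+\sum_{i=1}^{n-1}e^{\tau(x_{(i)}-x_{(n)})}\bigr)$, bound each remaining term by $1$ for the $\log n$ bound, and bound each by $e^{-\tau\delta}$ followed by $\log(1+u)\le u$ for the second bound.
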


\begin{proof}
Factor out $e^{\tau x_{(n)}}$ from the exponential sum:
\begin{equation*}
\begin{split}
\sum_{i=1}^n e^{\tau x_i}
&= e^{\tau x_{(n)}}\sum_{i=1}^n e^{\tau(x_{(i)}-x_{(n)})}\\
&= e^{\tau x_{(n)}}\Big(1+\sum_{i=1}^{n-1} e^{\tau(x_{(i)}-x_{(n)})}\Big)
\end{split}
\end{equation*}
Therefore
\[
\epsilon
= \frac1\tau\log\Big(1+\sum_{i=1}^{n-1} e^{\tau(x_{(i)}-x_{(n)})}\Big)
\]

\noindent\textbf{Uniform bound.}
Since $x_{(i)}\le x_{(n)}$ for all $i$, we have $e^{\tau(x_{(i)}-x_{(n)})}\le 1$, and thus
\[
\sum_{i=1}^n e^{\tau(x_{(i)}-x_{(n)})}\le n.
\]
Consequently,
\[
\epsilon
=\frac1\tau \log\!\Big(\sum_{i=1}^n e^{\tau(x_{(i)}-x_{(n)})}\Big)
\le \frac1\tau \log n.
\]

\noindent\textbf{$\delta$-Bound.} For each $i\le n-1$, we have $x_{(i)}\le x_{(n-1)}$, hence $
x_{(i)}-x_{(n)} \le x_{(n-1)}-x_{(n)} = -\delta,
$, so $e^{\tau(x_{(i)}-x_{(n)})}\le e^{-\tau\delta}$.
Summing these bounds gives
\[
\sum_{i=1}^{n-1} e^{\tau(x_{(i)}-x_{(n)})} \le (n-1)e^{-\tau\delta},
\]
and thus
\[
\epsilon \le \frac1\tau\log\Big(1+(n-1)e^{-\tau\delta}\Big).
\]
Finally, since $\log(1+u)\le u$ for all $u\ge 0$, we obtain
\[
\epsilon \le \frac1\tau\log\Big(1+(n-1)e^{-\tau\delta}\Big)
\le \frac{n-1}{\tau}e^{-\tau\delta}.
\]
\end{proof}

\begin{lemma}[Lambert $W$ inversion equivalence]\label{lem:lambert_inversion}
Let $a>0$. For any $y>0$,
\[
ye^y \ge a \;\Longleftrightarrow\; y \ge W(a),
\]
where $W$ is the (principal) Lambert $W$ function and $W(a) \in \mathbb{R}$.
\end{lemma}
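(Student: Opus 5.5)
The plan is to reduce everything to the monotonicity of the map $f(y):=ye^y$ on $[0,\infty)$ and the defining identity of the principal branch, $W(a)e^{W(a)}=a$ for $a\ge -1/e$. The first step is to establish that $W(a)$ is a well-defined real number, and in fact a positive one, for $a>0$. On $[0,\infty)$ the function $f$ is continuous with $f(0)=0$ and $f(y)\to\infty$ as $y\to\infty$. It is strictly increasing there, since $f'(y)=(1+y)e^y>0$. By the intermediate value theorem there is a unique $y_a\in(0,\infty)$ with $f(y_a)=a$.

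Next I identify $y_a$ with the principal branch. The principal branch $W$ is, by definition, the inverse of $f$ restricted to $[-1,\infty)$, and $f$ is strictly increasing on that interval. The value $y_a$ lies in $(0,\infty)\subset[-1,\infty)$ and solves $f(y_a)=a$, so $W(a)=y_a\in(0,\infty)\subset\mathbb{R}$. This settles the claim $W(a)\in\mathbb{R}$, and also gives $W(a)>0$, which we will need when comparing with $y>0$.

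The equivalence then follows from strict monotonicity of $f$ on $(0,\infty)$, using that $y$ and $W(a)$ both lie there. For the forward direction, suppose $ye^y\ge a=W(a)e^{W(a)}$. If we had $y<W(a)$, strict monotonicity would give $f(y)<f(W(a))=a$, a contradiction, so $y\ge W(a)$. For the reverse direction, suppose $y\ge W(a)$. Monotonicity gives $f(y)\ge f(W(a))=a$, i.e.\ $ye^y\ge a$.

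No step is a genuine obstacle. The only point needing care is that $f$ is not monotone on all of $\mathbb{R}$: it decreases on $(-\infty,-1)$, and for negative arguments the other branch $W_{-1}$ enters. This is why both the hypothesis $y>0$ and the choice of the principal branch matter. I would make this explicit by checking $W(a)>0$ before invoking monotonicity, so that both points being compared lie in the region where $f$ is increasing.
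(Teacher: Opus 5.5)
Your proposal is correct and follows essentially the same route as the paper. Both arguments use strict monotonicity of $f(y)=ye^y$ on $(0,\infty)$ via $f'(y)=(1+y)e^y>0$, get existence and uniqueness of a positive solution of $f(w)=a$ from continuity and the limits $0$ and $\infty$, identify that solution with $W(a)$, and then read off $f(y)\ge f(W(a)) \Longleftrightarrow y\ge W(a)$; your explicit identification of the principal branch as the inverse of $f$ on $[-1,\infty)$ is just a slightly more careful version of the paper's appeal to the definition of $W$.
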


\begin{proof}
Fix $a>0$ and consider the function
\[
f:(0,\infty)\to(0,\infty),\quad f(y)=y e^{y}.
\]
We first show that $f$ is strictly increasing on $(0,\infty)$. Indeed,
\[
f'(y)=e^{y}(1+y)>0 \qquad \text{for all } y>0,
\]
so $f$ is strictly increasing and hence injective.

By definition of the (principal) Lambert $W$ function, $W(a)$ is the unique real number $w$ such that
\[
w e^{w}=a.
\]
Existence and uniqueness follow here because $f$ is continuous, strictly increasing, and satisfies
$\lim_{y\to 0} f(y)=0$ and $\lim_{y\to\infty} f(y)=\infty$; thus for each $a>0$ there is a unique $w>0$ with $f(w)=a$.

Since $f$ is strictly increasing and $f(W(a))=a$, for any $y>0$ we have
\[
ye^y \ge a = f(W(a)) \;\Longleftrightarrow\; f(y)\ge f(W(a)) \;\Longleftrightarrow\; y\ge W(a). \qedhere
\]
\end{proof}

\begin{theorem}[LSE Error Bound Control]
Let $M>0$, $n>1$, and $x_1,\dots,x_n\in\R$. Let $x_{(1)}\le\cdots\le x_{(n)}$ and assume the top-gap
$\delta := x_{(n)}-x_{(n-1)} > 0$, $\epsilon:=\LSE_\tau(x_1,\dots,x_n)-x_{(n)} ~$ for $\tau>0$. If
\[
\tau \;\ge\; F_{\delta,n}(M)
\;:=\;\frac{1}{\delta}\,W\!\Big(\frac{\delta(n-1)}{M}\Big),
\]
then $\epsilon \le M$. Moreover, for any $M>0$, $\delta>0$, and $n>1$, $F_{\delta,n}(M)\in\R$.
\end{theorem}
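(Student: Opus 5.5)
The plan is to combine the $\delta$-bound of Lemma~\ref{lem:lse_gap_bound} with the Lambert inversion of Lemma~\ref{lem:lambert_inversion}. By Lemma~\ref{lem:lse_gap_bound}, since $\delta>0$, we have $\epsilon \le \frac{n-1}{\tau}e^{-\tau\delta}$. So it suffices to show that $\tau \ge F_{\delta,n}(M)$ implies $\frac{n-1}{\tau}e^{-\tau\delta}\le M$.

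For the reformulation, multiply by $\delta>0$ and set $y:=\tau\delta>0$. The sufficient condition $\frac{n-1}{\tau}e^{-\tau\delta}\le M$ becomes
\[
\frac{n-1}{\tau} \le M e^{\tau\delta}.
\]
Multiplying both sides by $\tau\delta/M>0$ gives
\[
\tau\delta\, e^{\tau\delta} \ge \frac{\delta(n-1)}{M},
\]
that is, $y e^{y}\ge a$ with $a:=\delta(n-1)/M$. Here $a>0$ because $\delta>0$, $n>1$, and $M>0$. Every step is an equivalence, since we only multiply by positive quantities.

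Next apply Lemma~\ref{lem:lambert_inversion} with this $a$ and $y=\tau\delta>0$. It gives $ye^y\ge a \iff y\ge W(a)$, which reads $\tau \ge \frac{1}{\delta}W\!\big(\frac{\delta(n-1)}{M}\big)=F_{\delta,n}(M)$. Hence the hypothesis $\tau\ge F_{\delta,n}(M)$ yields $\frac{n-1}{\tau}e^{-\tau\delta}\le M$, and therefore $\epsilon\le M$.

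Finally, for the claim $F_{\delta,n}(M)\in\R$: since $a>0$, Lemma~\ref{lem:lambert_inversion} guarantees that $W(a)$ is a real number, in fact a positive one. Dividing by $\delta>0$ keeps it real.

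There is no real obstacle here. The only care needed is to check the signs, so that every multiplication is by a positive quantity. One should also note that positivity of $y$, needed in Lemma~\ref{lem:lambert_inversion}, follows from $\tau>0$ and $\delta>0$.
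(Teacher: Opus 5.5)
Your proposal is correct and follows essentially the same route as the paper. You take the $\delta$-bound $\epsilon\le\frac{n-1}{\tau}e^{-\tau\delta}$ from Lemma~\ref{lem:lse_gap_bound}, set $y=\tau\delta$ and $a=\delta(n-1)/M$ to rewrite the sufficient condition as $ye^y\ge a$, use Lemma~\ref{lem:lambert_inversion} to turn $\tau\ge F_{\delta,n}(M)$ into that inequality, and get real-valuedness of $F_{\delta,n}(M)$ from $a>0$, exactly as the paper does.
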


\begin{proof}
By Lemma~\ref{lem:lse_gap_bound},
\[
\epsilon \le \frac{n-1}{\tau}e^{-\tau\delta}.
\]
Thus it suffices to show that $\tau \ge F_{\delta,n}(M)$ implies $\frac{n-1}{\tau}e^{-\tau\delta} \le M$.
Let $y:=\tau\delta>0$ and $a:=\frac{\delta(n-1)}{M}>0$. Since $\tau \ge F_{\delta,n}(M)$ means $y \ge W(a)$,
Lemma~\ref{lem:lambert_inversion} (applied right-to-left) gives $ye^y \ge a$, i.e.,
\[
\frac{\delta(n-1)}{M} \le ye^y
\;\Longleftrightarrow\;
\frac{\delta(n-1)}{M}\,e^{-y}\le y
\;\Longleftrightarrow\;
\frac{n-1}{\tau}e^{-\tau\delta} \le M.
\]
Hence $\epsilon \le \frac{n-1}{\tau}e^{-\tau\delta} \le M$.

For the final claim, since $M>0$, $\delta>0$, and $n>1$, the argument $\displaystyle\frac{\delta(n-1)}{M}$ is strictly positive.
The principal branch $W(z)$ is real-valued for all $z\ge 0$, hence
$\displaystyle W\!\big(\frac{\delta(n-1)}{M}\big)\in\R$ and therefore $F_{\delta,n}(M)\in\R$.
\end{proof}

\subsection{Notation for Gradient Stability Analysis}
\label{sec:appendix_notation}

Throughout this appendix, $\|\cdot\|$ denotes the Euclidean norm.
For probability vectors $p,q\in\mathbb{R}^n$, we use the total variation distance
\[
\operatorname{TV}(p,q):=\frac12\sum_{i=1}^n |p_i-q_i|
=
1-\sum_{i=1}^n \min\{p_i,q_i\}.
\]

\paragraph{CPO-Specific Quantities.}

Let
\[
\phi(u):=-\log \sigma(u).
\]
For a fixed prompt $x$ and sampled candidate pool
\[
Y=(Y_w^1,\dots,Y_w^n,Y_l^1,\dots,Y_l^n)\sim P_x,
\]
define branchwise scores and score gradients by
\[
s_i^w(Y):=\log \pi_\theta(Y_w^i\mid x),
\qquad
s_i^l(Y):=\log \pi_\theta(Y_l^i\mid x),
\]
\[
u_i(Y):=\nabla_\theta \log \pi_\theta(Y_w^i\mid x),
\qquad
v_i(Y):=\nabla_\theta \log \pi_\theta(Y_l^i\mid x).
\]
The hard-selected indices and smoothed branch weights are
\[
i_w^\star(Y):=\arg\min_i s_i^w(Y),
\qquad
i_l^\star(Y):=\arg\max_i s_i^l(Y),
\qquad
I(Y):=(i_w^\star(Y),i_l^\star(Y)),
\]
\[
q_{w,i}(Y):=\frac{e^{-\tau s_i^w(Y)}}{\sum_{j=1}^n e^{-\tau s_j^w(Y)}},
\qquad
q_{l,i}(Y):=\frac{e^{\tau s_i^l(Y)}}{\sum_{j=1}^n e^{\tau s_j^l(Y)}}.
\]
For hard selection, we interpret the update on each sampled pool $Y$ as the active-branch gradient, i.e., the gradient evaluated with the selected indices $i_w^\star(Y)$ and $i_l^\star(Y)$ held fixed. Equivalently, away from selector-switching boundaries, the hard objective is locally smooth and the formula below is its ordinary gradient.
Define the hard and smoothed margins by
\[
\Delta_{\mathrm{hard}}(Y):=
\beta\bigl(s_{i_w^\star(Y)}^w(Y)-s_{i_l^\star(Y)}^l(Y)\bigr),
\]
\[
\Delta_{\mathrm{sm}}(Y):=
\beta\Big(-\LSE_\tau(-s_1^w(Y),\dots,-s_n^w(Y))
-\LSE_\tau(s_1^l(Y),\dots,s_n^l(Y))\Big).
\]
The corresponding gradients are
\begin{align}
g_{\mathrm{hard}}(Y)
&:=\beta\,\phi'(\Delta_{\mathrm{hard}}(Y))
\bigl(u_{i_w^\star(Y)}(Y)-v_{i_l^\star(Y)}(Y)\bigr), \label{eq:hard_grad_exact}\\
g_{\mathrm{sm}}(Y)
&:=\beta\,\phi'(\Delta_{\mathrm{sm}}(Y))
\left(\sum_{i=1}^n q_{w,i}(Y)u_i(Y)-\sum_{i=1}^n q_{l,i}(Y)v_i(Y)\right). \label{eq:sm_grad_exact}
\end{align}

We also use
\[
\bar u(Y):=\frac1n\sum_{i=1}^n u_i(Y),
\qquad
\bar v(Y):=\frac1n\sum_{i=1}^n v_i(Y),
\]
and the corresponding averaged smoothed gradient
\[
\bar g(Y):=\beta\,\phi'(\Delta_{\mathrm{sm}}(Y))\bigl(\bar u(Y)-\bar v(Y)\bigr).
\]

For any square-integrable vector random variable $X$, we use
\[
\operatorname{Var}(X):=\mathbb{E}\!\left[\|X-\mathbb{E}X\|^2\right].
\]
For probability vectors $p,q\in\mathbb{R}^n$, we use the total variation distance
\[
\operatorname{TV}(p,q):=\frac12\sum_{i=1}^n |p_i-q_i|
=
1-\sum_{i=1}^n \min\{p_i,q_i\}.
\]
We write
\[
\mathrm{Unif}_n:=\left(\frac1n,\dots,\frac1n\right)
\]
for the uniform distribution on $n$ points.

For the hard lower bound, define for each pair index $i$
\[
p_i:=\P(I(Y)=i),
\qquad
\mu_i:=\mathbb{E}[g_{\mathrm{hard}}(Y)\mid I(Y)=i].
\]
We also use the two-index specialization: there exist $a\neq b$ and constants $p\in(0,1/2]$, $m>0$ such that
\[
\P(I(Y)=a)\ge p,
\qquad
\P(I(Y)=b)\ge p,
\qquad
\|\mu_a-\mu_b\|\ge m.
\]

Finally, define
\[
\delta_n(\tau,R):=
\frac{(n-1)(e^{\tau R}-1)}{n(e^{\tau R}+n-1)}.
\]

\paragraph{Assumptions for the Stability Comparison.}

\begin{assumption}[Bounded branch scores and score gradients]
\label{asm:grad_bound}
\label{asm:score_range}
There exist constants $G>0$ and $R_w,R_l\ge 0$ such that
\[
\|u_i(Y)\|\le G,
\qquad
\|v_i(Y)\|\le G
\]
for every candidate index $i$ and every sampled pool $Y$, and
\[
\max_i s_i^w(Y)-\min_i s_i^w(Y)\le R_w,
\qquad
\max_i s_i^l(Y)-\min_i s_i^l(Y)\le R_l
\]
for every sampled pool $Y$.
\end{assumption}

\begin{assumption}[Bounded within-pool gradient diameters]
\label{asm:grad_diameter}
There exist $D_w,D_l\ge 0$ such that
\[
\max_{i,j}\|u_i(Y)-u_j(Y)\|\le D_w,
\qquad
\max_{i,j}\|v_i(Y)-v_j(Y)\|\le D_l
\]
for every sampled pool $Y$.
\end{assumption}

\begin{assumption}[Selector-mixture dominance]
\label{asm:selector_dominance}
The selector-mixture variance floor of hard selection dominates the fluctuation budget of smoothing:
\[
p^2m^2
>
2\operatorname{Var}(\bar g(Y))
+
2\beta^2\Big(D_w\,\delta_n(\tau,R_w)+D_l\,\delta_n(\tau,R_l)\Big)^2.
\]
\end{assumption}

Assumption~\ref{asm:selector_dominance} is the explicit comparison condition used in the main theorem. Its left-hand side is the irreducible selector-mixture variance floor induced by hard selection, while the right-hand side measures the total fluctuation budget of smoothing: the branchwise averaging baseline $\operatorname{Var}(\bar g(Y))$ plus the additional soft-weighting term controlled by the diameters $D_w,D_l$ and the shrinkage factor $\delta_n(\tau,R)$. Intuitively, the condition is most plausible when hard selector competition is strong but the softmax weights remain close to a stable branchwise average.

%----------------------------------------------------------------------
\subsection{Auxiliary Lemmas for Gradient Stability}
\label{sec:gradient_stability_proof}
%----------------------------------------------------------------------

\begin{lemma}[Hard selector induces a variance floor]
\label{lem:hard_lower}
Under the notation above,
\[
V_{\mathrm{hard}}(x,\theta)
\ge
\frac12\sum_{i,j} p_i p_j\|\mu_i-\mu_j\|^2.
\]
In particular, if there exist $a\neq b$ and $p\in(0,1/2]$ such that
\[
\P(I(Y)=a)\ge p,
\qquad
\P(I(Y)=b)\ge p,
\qquad
\|\mu_a-\mu_b\|\ge m,
\]
then
\[
V_{\mathrm{hard}}(x,\theta)\ge p^2m^2.
\]
\end{lemma}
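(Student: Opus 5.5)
The plan is to use the law of total variance with respect to the discrete selector $I(Y)$, and then the pairwise form of the variance of a finite mixture. Write $g:=g_{\mathrm{hard}}(Y)$ and $\mu:=\mathbb{E}[g]=\sum_i p_i\mu_i$, where the sum runs over the pair indices $i$ with $p_i>0$. Conditioning on $I(Y)$ gives
\[
V_{\mathrm{hard}}(x,\theta)=\mathbb{E}\big[\|g-\mu_{I(Y)}\|^2\big]+\sum_i p_i\|\mu_i-\mu\|^2 .
\]
The cross term vanishes because $\mathbb{E}[g-\mu_{I(Y)}\mid I(Y)]=0$. Square-integrability of $g$ follows from Assumption~\ref{asm:grad_bound}, since $|\phi'|\le 1$ and $\|u_i\|,\|v_i\|\le G$. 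We then drop the nonnegative within-selector term. This leaves the between-selector variance $\sum_i p_i\|\mu_i-\mu\|^2$.

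Next we apply the standard identity
\[
\sum_i p_i\|\mu_i-\mu\|^2=\tfrac12\sum_{i,j}p_ip_j\|\mu_i-\mu_j\|^2 ,
\]
valid because $\sum_i p_i=1$. To verify it, expand $\|\mu_i-\mu_j\|^2=\|\mu_i-\mu\|^2+\|\mu_j-\mu\|^2-2\langle\mu_i-\mu,\mu_j-\mu\rangle$. The double sum of the cross term is $-2\|\sum_i p_i(\mu_i-\mu)\|^2=0$. This yields the first claim.

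For the specialization, every summand in the double sum is nonnegative, so we keep only the ordered pairs $(a,b)$ and $(b,a)$. Their combined contribution is
\[
\tfrac12\cdot 2\,p_ap_b\|\mu_a-\mu_b\|^2\ \ge\ p^2m^2 ,
\]
using $p_a,p_b\ge p$ and $\|\mu_a-\mu_b\|\ge m$. This gives $V_{\mathrm{hard}}\ge p^2m^2$.

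There is no real obstacle here. The only care points are measurability and well-definedness. Ties in the argmin/argmax need a fixed tie-breaking rule so that $I(Y)$ is a genuine random variable. Conditional means $\mu_i$ are defined only for indices with $p_i>0$, and this is harmless because indices with $p_i=0$ contribute zero weight.
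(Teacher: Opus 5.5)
Your proposal is correct and follows the paper's proof essentially step for step: the law of total variance conditioned on the selector $I(Y)$, dropping the within-selector term, and rewriting the between-selector variance in pairwise form. Your explicit use of both ordered pairs $(a,b)$ and $(b,a)$ to absorb the factor $\tfrac12$ is slightly more careful than the paper's wording, ``keeping only the $(a,b)$ contribution.''
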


\begin{proof}
Let
\[
Z(Y):=g_{\mathrm{hard}}(Y),
\]
Applying the law of total variance with respect to the hard-selected pair index
\[
\operatorname{Var}(Z)
=
\mathbb{E}\!\bigl[\operatorname{Var}(Z\mid I(Y))\bigr]
+ \operatorname{Var}\!\bigl(\mathbb{E}[Z\mid I(Y)]\bigr),
\]
and therefore
\[
V_{\mathrm{hard}}(x,\theta)=\operatorname{Var}(Z)\ge \operatorname{Var}\!\bigl(\mathbb{E}[Z\mid I(Y)]\bigr).
\]
Write
\[
\mu_j:=\mathbb{E}[Z\mid I(Y)=j],
\qquad
p_j:=\P(I(Y)=j).
\]
Then
\[
\operatorname{Var}\!\bigl(\mathbb{E}[Z\mid I(Y)]\bigr)
=
\sum_j p_j\left\|\mu_j-\sum_k p_k\mu_k\right\|^2
=
\frac12\sum_{i,j} p_i p_j\|\mu_i-\mu_j\|^2.
\]
This proves the first claim.

For the second claim, suppose there exist two indices $a\neq b$ such that $p_a\ge p$, $p_b\ge p$, and
\[
\|\mu_a-\mu_b\|
\ge m.
\]
Keeping only the $(a,b)$ contribution in the pairwise-variance formula, we obtain
\[
\operatorname{Var}\!\bigl(\mathbb{E}[Z\mid I(Y)]\bigr)
\ge
p_a p_b \|\mu_a-\mu_b\|^2
\ge
p^2m^2.
\]
Hence
\[
V_{\mathrm{hard}}(x,\theta)\ge p^2m^2.
\]
\end{proof}

\begin{remark}[Interpretation]
Lemma~\ref{lem:hard_lower} isolates the instability mechanism specific to hard selection without conditioning on any auxiliary event. Whenever the hard selector places non-trivial probability mass on multiple pair indices and those indices induce distinct average updates, the resulting selector mixture contributes a non-zero variance floor. Smoothing removes this discrete mixture by replacing one-hot selection with a continuous weighted aggregation, which is why it is more stable.
\end{remark}

%----------------------------------------------------------------------
\begin{lemma}[Refined upper bound for smoothed variance]
\label{lem:smooth_upper}
Under Assumptions~\ref{asm:grad_bound} and \ref{asm:grad_diameter},
\[
\|g_{\mathrm{sm}}(Y)-\bar g(Y)\|
\le
\beta\Big(D_w\,\delta_n(\tau,R_w)+D_l\,\delta_n(\tau,R_l)\Big)
\]
for every sampled pool $Y$. Consequently,
\[
V_{\mathrm{sm}}(x,\theta)
\le
2\operatorname{Var}(\bar g(Y))
+
2\beta^2\Big(D_w\,\delta_n(\tau,R_w)+D_l\,\delta_n(\tau,R_l)\Big)^2.
\]
\end{lemma}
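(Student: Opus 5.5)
The plan is to prove the pointwise bound first and then obtain the variance bound from it by an elementary inequality.

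\textbf{Pointwise bound.} Subtracting the definitions \eqref{eq:sm_grad_exact} and $\bar g(Y)$ gives
\[
g_{\mathrm{sm}}(Y)-\bar g(Y)=\beta\,\phi'(\Delta_{\mathrm{sm}}(Y))\Big(\sum_{i}\big(q_{w,i}-\tfrac1n\big)u_i-\sum_i\big(q_{l,i}-\tfrac1n\big)v_i\Big).
\]
Since $\phi'(t)=\sigma(t)-1\in(-1,0)$, we have $|\phi'|\le 1$. By the triangle inequality it then suffices to bound each branch separately, say
\[
\Big\|\sum_i(q_i-\tfrac1n)u_i\Big\|\le D_w\,\delta_n(\tau,R_w).
\]
The coefficients $c_i=q_i-\tfrac1n$ sum to zero. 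Split them into positive and negative parts; both parts have total mass $t:=\operatorname{TV}(q,\mathrm{Unif}_n)$. Then $\sum_i c_iu_i=t(\bar u^+-\bar u^-)$, where $\bar u^\pm$ are convex combinations of the $u_i$. Assumption~\ref{asm:grad_diameter} gives $\|\bar u^+-\bar u^-\|\le D_w$, so the branch norm is at most $D_w\,\operatorname{TV}(q,\mathrm{Unif}_n)$. Assumption~\ref{asm:grad_bound} is used only through the score ranges $R_w,R_l$ in what follows.

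\textbf{Bounding the total variation.} The remaining step is to bound $\operatorname{TV}(q,\mathrm{Unif}_n)$, where $q_i\propto e^{\tau z_i}$ and the scores $z_i$ have range at most $R$. For the preferred branch $z_i=-s_i^w$, and negation preserves the range. Write $r=e^{\tau R}$.

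\begin{itemize}
\item $\operatorname{TV}$ is convex in the score vector's induced weights and depends only on the ordering. So I would argue that the maximum is attained at a two-level configuration: $k$ scores at the top value and $n-k$ at the bottom.
\item At such a configuration,
\[
\operatorname{TV}=\frac{k(n-k)(r-1)}{n\big(k(r-1)+n\big)}.
\]
\item At $k=1$ this is exactly $\delta_n(\tau,R)$.
\end{itemize}

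\textbf{Main obstacle.} I expect the hard step to be showing that $k=1$ is the maximizer. It is true when $r$ is large; for instance $k=1$ beats $k=2$ iff $(n-1)(2r+n-2)\ge 2(n-2)(r+n-1)$. It fails for $\tau R$ small and $n\ge4$. For $n=4$ and $r\to1$, the $k=2$ value is $\tfrac{4(r-1)}{16}$ versus $\tfrac{3(r-1)}{16}$ for $k=1$. Taking $u_1=u_2$ and $u_3=u_4$ at distance $D_w$ makes this a genuine obstruction to the stated inequality.

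\textbf{Repair.} I would first try to add a condition on $\tau R$ under which $k=1$ dominates all $k$. Failing that, I would replace $\delta_n$ by
\[
\max_k \frac{k(n-k)(r-1)}{n\big(k(r-1)+n\big)}.
\]
A cruder alternative is the bound $\tfrac{r-1}{r+1}$, which is uniform in $n$ and follows from $q_i/q_j\le r$. Either replacement keeps the rest of the argument intact.

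\textbf{Variance bound.} The variance $\mathbb{E}\|X-c\|^2$ is minimized at $c=\mathbb{E}X$. Therefore
\[
V_{\mathrm{sm}}\le \mathbb{E}\big\|g_{\mathrm{sm}}-\mathbb{E}\bar g\big\|^2.
\]
Write $g_{\mathrm{sm}}-\mathbb{E}\bar g=(\bar g-\mathbb{E}\bar g)+(g_{\mathrm{sm}}-\bar g)$ and apply $\|a+b\|^2\le 2\|a\|^2+2\|b\|^2$. This gives
\[
V_{\mathrm{sm}}\le 2\operatorname{Var}(\bar g(Y))+2\sup_Y\|g_{\mathrm{sm}}(Y)-\bar g(Y)\|^2,
\]
and substituting the pointwise bound finishes the proof.
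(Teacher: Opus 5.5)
Your analysis is correct, and the obstruction you found is a genuine error in the paper's own proof rather than a gap in your argument. Apart from the total-variation estimate, your route is the paper's. The paper also uses $|\phi'|\le 1$ and bounds each branch by $D_w\operatorname{TV}(q_w,\mathrm{Unif}_n)$, via dual unit vectors and the oscillation bound for $\langle z,u_i\rangle$; this is equivalent to your positive/negative-part decomposition. It then finishes with $\|a+b\|^2\le 2\|a\|^2+2\|b\|^2$.

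The failure is at one step. The paper asserts that every soft weight satisfies $q_{w,i}\ge 1/(e^{\tau R_w}+n-1)$, and uses this for the $n-1$ non-maximal indices in $1-\sum_i\min\{q_{w,i},1/n\}$. With one score at the bottom and $n-1$ at the top, the smallest weight is $1/(1+(n-1)r)$ with $r=e^{\tau R_w}$. Since $(1+(n-1)r)-(r+n-1)=(n-2)(r-1)>0$ for $n\ge3$ and $r>1$, the asserted lower bound is false. Your two-level formula $f_k:=k(n-k)(r-1)/\bigl(n(k(r-1)+n)\bigr)$ shows that the conclusion fails too, not just the step. Your comparison simplifies to: $f_2>f_1$ exactly when $2e^{\tau R}<(n-1)(n-2)$. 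So for $n\ge4$ and small $\tau R$, the stated $\delta_n=f_1$ is not an upper bound. For $n=3$ the constant survives, but the paper's argument only covers $n=2$.

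Several points should be tightened.

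\textbf{Two-level extremality.} Justify it by a vertex argument, not by ``depends only on the ordering.'' The feasible weights form the polytope $\{q:\ q_i\ge0,\ \sum_i q_i=1,\ q_i\le r\,q_j\ \forall i,j\}$. Since $\operatorname{TV}(\cdot,\mathrm{Unif}_n)$ is convex, its maximum is attained at a vertex. Any coordinate strictly between the minimum and maximum can be perturbed, rescaling the others, without leaving the polytope. Hence the vertices are exactly the two-level configurations with ratio $r$.

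\textbf{Exact side condition.} Write $f_k=(r-1)g(k/n)$ with $g(x)=x(1-x)/(1+(r-1)x)$. Then $g'$ has the sign of $1-2x-(r-1)x^2$, which is decreasing, so $g$ is unimodal. Hence $k=1$ is the maximizer iff $f_1\ge f_2$, i.e.\ iff $2e^{\tau R}\ge(n-1)(n-2)$. This is the side condition you were looking for, imposed for both $R_w$ and $R_l$.

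\textbf{Sharper uniform bound.} Maximizing $g$ over $(0,1)$ gives $x^\ast=1/(\sqrt r+1)$. This yields the sharp $n$-free bound $\operatorname{TV}\le(\sqrt r-1)/(\sqrt r+1)$, which improves on your $(r-1)/(r+1)$.

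\textbf{Completing the counterexample.} To turn your $n=4$ example into an actual violation of the pointwise inequality, you also need to handle the prefactor. Take all non-preferred scores equal, so $q_l$ is uniform and $R_l=0$. Then require $\sigma(-\Delta_{\mathrm{sm}})>f_1/f_2=3(r+1)/(2(r+3))$, which tends to $3/4$ as $r\to1$. This is admissible once non-preferred likelihoods sit well above preferred ones.

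\textbf{Variance inequality.} For $n\ge5$ and $\tau R\to0$, $\max_k f_k/f_1\to\max_k k(n-k)/(n-1)>\sqrt2$. Take a two-point law for $Y$ with identical scores (so $\Delta_{\mathrm{sm}}$ is fixed and very negative), constant $\bar g$, and $g_{\mathrm{sm}}-\bar g=\pm\xi$. This violates the variance inequality as well. The corrected constant must therefore be carried into Assumption~\ref{asm:selector_dominance} and Theorem~\ref{thm:variance_cpo}, as your repair implicitly does.
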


\begin{proof}
Write
\[
\Xi(Y):=g_{\mathrm{sm}}(Y)-\bar g(Y).
\]
By \eqref{eq:sm_grad_exact},
\[
\Xi(Y)
=
\beta\,\phi'(\Delta_{\mathrm{sm}}(Y))
\left[
\left(\sum_{i=1}^n q_{w,i}(Y)u_i(Y)-\bar u(Y)\right)
- \left(\sum_{i=1}^n q_{l,i}(Y)v_i(Y)-\bar v(Y)\right)
\right].
\]
Since $\phi(u)=-\log \sigma(u)$, we have $|\phi'(u)|\le 1$.

For the preferred branch, Assumption~\ref{asm:score_range} implies
\[
\frac{\max_i e^{-\tau s_i^w(Y)}}{\min_i e^{-\tau s_i^w(Y)}}\le e^{\tau R_w}.
\]
Hence every soft weight satisfies
\[
\frac{1}{e^{\tau R_w}+n-1}
\le q_{w,i}(Y)
\le
\frac{e^{\tau R_w}}{e^{\tau R_w}+n-1}.
\]
Since $\max_i q_{w,i}(Y)\ge 1/n$, we have
\begin{align}
\operatorname{TV}(q_w(Y),\mathrm{Unif}_n)
&=
1-\sum_{i=1}^n \min\{q_{w,i}(Y),1/n\} \notag\\
&\le
1-\frac1n-\frac{n-1}{e^{\tau R_w}+n-1}
= \delta_n(\tau,R_w).
\label{eq:tv_bound_w}
\end{align}
Now let $z$ be any unit vector in the dual norm. The scalar function
\[
f_z(i):=\langle z,u_i(Y)\rangle
\]
has oscillation at most $D_w$ by Assumption~\ref{asm:grad_diameter}. Therefore, by the standard total-variation bound for bounded observables,
\[
\left|
\sum_{i=1}^n q_{w,i}(Y)\langle z,u_i(Y)\rangle
- \frac1n\sum_{i=1}^n \langle z,u_i(Y)\rangle
\right|
\le
D_w\,\operatorname{TV}(q_w(Y),\mathrm{Unif}_n).
\]
Taking the supremum over dual unit vectors $z$ and using \eqref{eq:tv_bound_w} gives
\[
\left\|\sum_{i=1}^n q_{w,i}(Y)u_i(Y)-\bar u(Y)\right\|
\le D_w\,\delta_n(\tau,R_w).
\label{eq:avg_dev_w}
\]

The non-preferred branch is identical. Assumption~\ref{asm:score_range} implies
\[
\operatorname{TV}(q_l(Y),\mathrm{Unif}_n)\le \delta_n(\tau,R_l),
\]
and Assumption~\ref{asm:grad_diameter} yields
\[
\left\|\sum_{i=1}^n q_{l,i}(Y)v_i(Y)-\bar v(Y)\right\|
\le D_l\,\delta_n(\tau,R_l).
\label{eq:avg_dev_l}
\]

Combining \eqref{eq:avg_dev_w} and \eqref{eq:avg_dev_l}, we obtain
\[
\|\Xi(Y)\|
\le
\beta\Big(D_w\,\delta_n(\tau,R_w)+D_l\,\delta_n(\tau,R_l)\Big).
\]
This proves the first claim.

For the variance bound, write
\[
g_{\mathrm{sm}}(Y)-\mathbb{E}g_{\mathrm{sm}}(Y)
=
\bigl(\bar g(Y)-\mathbb{E}\bar g(Y)\bigr)
+ \bigl(\Xi(Y)-\mathbb{E}\Xi(Y)\bigr).
\]
Using $\|a+b\|^2\le 2\|a\|^2+2\|b\|^2$ and
\[
\operatorname{Var}(\Xi)\le \mathbb{E}\|\Xi\|^2,
\]
we get
\[
V_{\mathrm{sm}}(x,\theta)
\le
2\operatorname{Var}(\bar g(Y))
+ 2\mathbb{E}\|\Xi(Y)\|^2.
\]
Applying the pointwise bound on $\|\Xi(Y)\|$ yields
\[
V_{\mathrm{sm}}(x,\theta)
\le
2\operatorname{Var}(\bar g(Y))
+ 2\beta^2\Big(D_w\,\delta_n(\tau,R_w)+D_l\,\delta_n(\tau,R_l)\Big)^2.
\]
\end{proof}

%----------------------------------------------------------------------
\subsection{Proof of Theorem~\ref{thm:variance_cpo}}
\label{sec:smooth_upper_proof}
%----------------------------------------------------------------------

\begin{proof}
Lemma~\ref{lem:hard_lower} gives
\[
V_{\mathrm{hard}}(x,\theta)\ge p^2m^2.
\]
Lemma~\ref{lem:smooth_upper} gives
\[
V_{\mathrm{sm}}(x,\theta)
\le
2\operatorname{Var}(\bar g(Y))
+ 2\beta^2\Big(D_w\,\delta_n(\tau,R_w)+D_l\,\delta_n(\tau,R_l)\Big)^2.
\]
Subtracting the two bounds yields
\[
V_{\mathrm{hard}}(x,\theta)-V_{\mathrm{sm}}(x,\theta)
\ge
p^2m^2
- 2\operatorname{Var}(\bar g(Y))
- 2\beta^2\Big(D_w\,\delta_n(\tau,R_w)+D_l\,\delta_n(\tau,R_l)\Big)^2.
\]
Assumption~\ref{asm:selector_dominance} therefore implies
\[
V_{\mathrm{hard}}(x,\theta)>V_{\mathrm{sm}}(x,\theta).
\]
\end{proof}

\section{Data Aggregation Strategies}
\label{sec:agg_method}
Table \ref{tab:agg} summarizes the different aggregation strategies used to aggregate multi-pair preference data for preference optimization. We present 4 strategies and corresponding formulation to generate pseudo preferred data $\tilde{y}_w$ and pseudo non-preferred data $\tilde{y}_l$.

% \begin{table}[h]
% \centering
% \small
% \renewcommand{\arraystretch}{1.6}
% \setlength{\tabcolsep}{6pt}
% \begin{tabular}{lccc}
% \specialrule{1.5pt}{0pt}{0pt}
% \textbf{Method} 
% & $\log P_\theta(\tilde{y}_w|x)$ 
% & $\log P_\theta(\tilde{y}_l|x)$ 
% & \textbf{Description} \\
% \midrule

% \textit{Weighted} 
% & $\displaystyle \sum_{i=1}^n \frac{P_{\theta_0}(y_w^i|x)}{\sum_{i=1}^n P_{\theta_0}(y_w^i|x)} \log P_\theta(y_w^i|x)$ 
% & $\displaystyle \sum_{i=1}^n \frac{P_{\theta_0}(y_l^i|x)}{\sum_{i=1}^n P_{\theta_0}(y_l^i|x)} \log P_\theta(y_l^i|x)$ 
% & Likelihood-weighted aggregation \\
% \midrule

% \textit{Average} 
% & $\displaystyle \frac{1}{n}\sum_{i=1}^n \log P_\theta(y_w^i|x)$ 
% & $\displaystyle \frac{1}{n}\sum_{i=1}^n \log P_\theta(y_l^i|x)$ 
% & Uniform averaging \\
% \midrule

% \textit{Min} 
% & $\displaystyle \min_{1\le i \le n} \log P_\theta(y_w^i|x)$ 
% & $\displaystyle \min_{1\le i \le n} \log P_\theta(y_l^i|x)$ 
% & Select worst likelihood \\
% \midrule

% \textit{Max} 
% & $\displaystyle \max_{1\le i \le n} \log P_\theta(y_w^i|x)$ 
% & $\displaystyle \max_{1\le i \le n} \log P_\theta(y_l^i|x)$ 
% & Select best likelihood \\
% \specialrule{1.5pt}{0pt}{0pt}

% \end{tabular}
% \caption{Different aggregation strategies for multi-pair preference data to formulate $\tilde{y}_w$ and $\tilde{y}_l$.}
% \label{tab:agg}
% \end{table}

\begin{table}[h]
\centering
\caption{Different aggregation strategies for multi-pair preference data to formulate $\tilde{y}_w$ and $\tilde{y}_l$.}
\label{tab:agg}
\small
\renewcommand{\arraystretch}{1.6}
\setlength{\tabcolsep}{6pt}
\resizebox{\textwidth}{!}{%
\begin{tabular}{lccc}
\specialrule{1.5pt}{0pt}{0pt}
\textbf{Method} 
& $\log P_\theta(\tilde{y}_w|x)$ 
& $\log P_\theta(\tilde{y}_l|x)$ 
& \textbf{Description} \\
\midrule

\textit{Weighted} 
& $\displaystyle \sum_{i=1}^n \frac{P_{\theta_0}(y_w^i|x)}{\sum_{i=1}^n P_{\theta_0}(y_w^i|x)} \log P_\theta(y_w^i|x)$ 
& $\displaystyle \sum_{i=1}^n \frac{P_{\theta_0}(y_l^i|x)}{\sum_{i=1}^n P_{\theta_0}(y_l^i|x)} \log P_\theta(y_l^i|x)$ 
& Weighted aggregation by likelihood given base model\\
\midrule

\textit{Average} 
& $\displaystyle \frac{1}{n}\sum_{i=1}^n \log P_\theta(y_w^i|x)$ 
& $\displaystyle \frac{1}{n}\sum_{i=1}^n \log P_\theta(y_l^i|x)$ 
& Uniform averaging \\
\midrule

% \textit{Min} 
% & 
% $\log P_\theta(y_{w^{*}}|x)$, where $w^{*}=\argmin_{i} P_{\theta_0}(y_w^i|x)$
% & $\displaystyle \log P_\theta(y_{l^{*}}|x)$, where $l^{*}=\argmin_{i} P_{\theta_0}(y_l^i|x)$ 
% & Select lowest likelihood given base model \\
% \midrule

% \textit{Max} 
% &
% $\log P_\theta(y_{w^{*}}|x)$, where $w^{*}=\argmax_{i} P_{\theta_0}(y_w^i|x)$
% & $\displaystyle \log P_\theta(y_{l^{*}}|x)$, where $l^{*}=\argmax_{i} P_{\theta_0}(y_l^i|x)$ 
% & Select highest likelihood given base model\\

\specialrule{1.5pt}{0pt}{0pt}

\end{tabular}%
}
\label{tab:agg_strategies}
\end{table}

\section{Baseline}
\label{sec:baseline}
\begin{itemize}
\item CoT \citep{wei2023chainofthoughtpromptingelicitsreasoning} instructs the LLM to generate answers accompanied by explicit reasoning steps.

\item ReAct \citep{yao2023reactsynergizingreasoningacting} integrates reasoning, action, and observation steps, concluding the generation process upon reaching a terminal state. The action step involves either generating a query to retrieve additional knowledge or finalizing the generation, while the observation step incorporates the retrieved knowledge documents.

\item SelfAsk \citep{press-etal-2023-measuring} involves generating follow-up questions, retrieving relevant information, and answering these follow-up questions. Each retrieval operation is based on the generated follow-up questions. When no additional follow-up questions are produced, the LLM provides the answer to the original question. We prepend the newly retrieved knowledge to the original question following the approach of Yoran et al. (2023).

\item BlendFilter \citep{wang2024blendfilter} combines internal and external knowledge to enhance retrieval quality. Additionally, it employs the LLM to filter out irrelevant information, thereby preventing the model from being misled.

\item InstructRAG \citep{wei2025instructraginstructingretrievalaugmentedgeneration} instructs the LLM to generate rationales that analyze the relevance between the query and the retrieved knowledge.

\item RoseRAG \citep{liu2025roserag} generates preference data through rejection sampling and optimizes the model using a single hard preference pair selected based on the model's likelihoods.
\end{itemize}

\section{Implementation Details}
\label{sec:impl}
We evaluate models with two small-sclae LLMs: Qwen2.5-1.5B-Instruct \citep{qwen2.5} and Gemma-2-2b-it \citep{team2024gemma}. We employ the state-of-the-art efficient retrieval method ColBERT v2 \citep{santhanam2022colbert} as implemented by \citep{khattab2023dspy}, which applies quantization to accelerate approximate nearest neighbor search. Our knowledge base comprises Wikipedia abstracts from the 2017 dump \citep{khattab2023demonstrate}. For experiments, we use Transformers \citep{wolf2020transformers} and TRL \citep{vonwerra2022trl} as backbone frameworks for fine-tuning and vLLM \citep{kwon2023efficientmemorymanagementlarge} for inference. For multi-pair aggregation, we implemented the trainer based on TRL library to apply our {\name} strategy and comparisons to other data aggregation strategies for three types of preference optimization algorithms (DPO, CPO, ORPO). 
% \footnote{\url{https://github.com/Tptrix29/robust-rag}}

In all experiments, we adopt a 3-shot in-context learning setting following the approach of \citep{shao2023enhancing}, with the count of retrieval documents set to 5 for all methods. The prompts for generating preference data are shown in Figure~\ref{fig:rationale_generation}, while those for question answering are presented in Figure~\ref{fig:benchmark_prompt}. We generate preference data using the first 10k training samples from HPQA, 2WMHQA and MSQQA while all data in MHRAGQA. For all backbones, we apply LoRA \citep{hu2021loralowrankadaptationlarge} with a rank of 16 and \texttt{lora\_alpha} = 32, targeting "\texttt{all-linear} modules. Experiments are conducted on eight 80G A100 NVIDIA GPUs. The learning rate is $1e^{-5}$ and number of training epochs is 2. 

\section{Qualitative Comparison of RoseRAG and {\name}}
Figure~\ref{fig:roserag_prism_comparison} presents a case study comparing the inference outputs of models trained with RoseRAG and {\name} on a multi-hop question with noisy retrieved passages. The RoseRAG-trained model, which was optimized using only a single hardest preference pair per query, fails to reason correctly through the noisy context and produces an incorrect answer. In contrast, the {\name}-trained model, which leveraged all preference pairs via smoothed aggregation during training, demonstrates stronger robustness to noisy distractors and arrives at the correct answer. This case illustrates that training with smoothed multi-pair aggregation yields models with improved evidence discrimination and multi-hop reasoning ability under noisy retrieval conditions.

\begin{figure}[h!]
\centering
  \includegraphics[width=0.95\columnwidth]{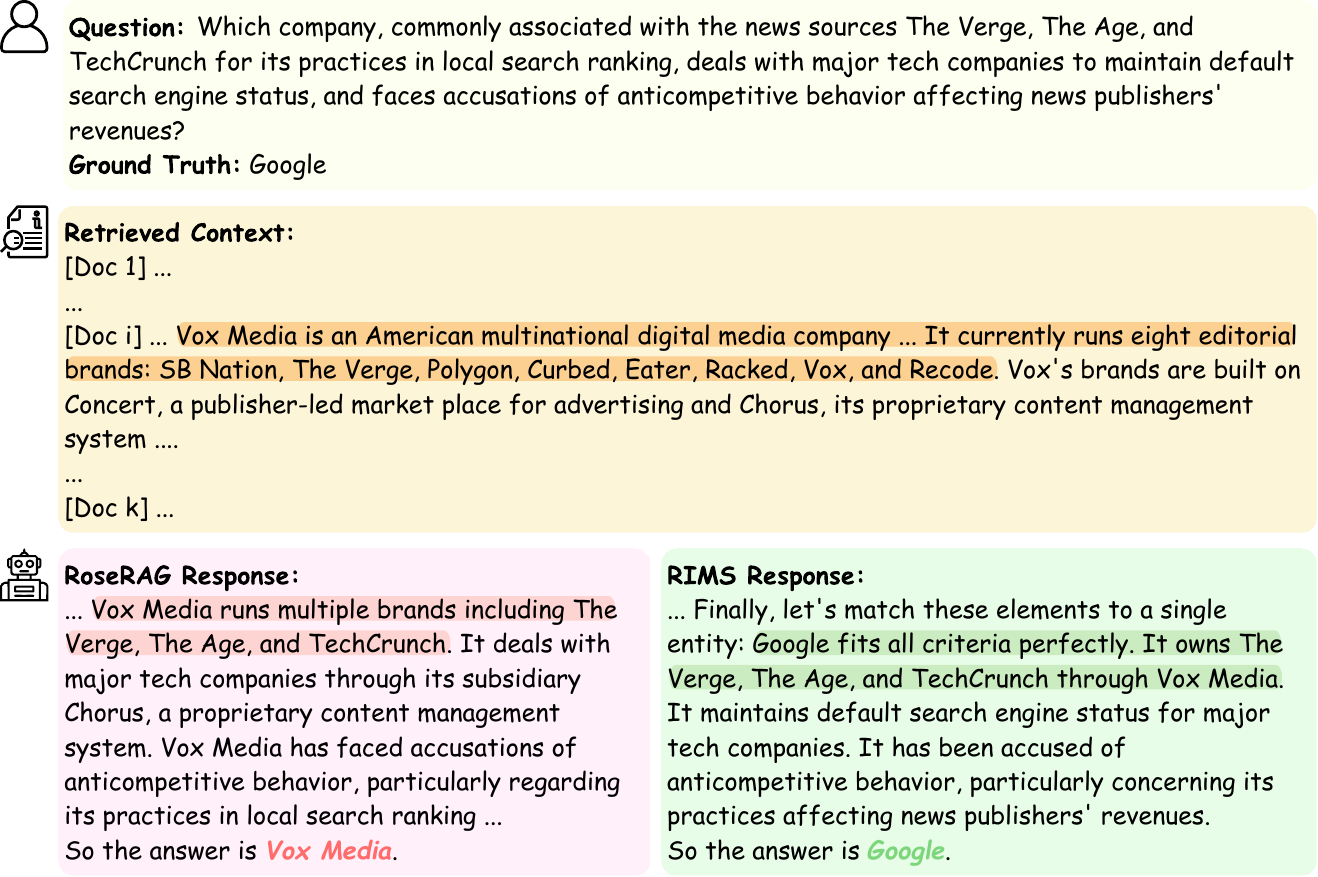}
  \caption{Qualitative comparison on a multi-hop question. RoseRAG, which relies on hard pair selection, is misled by noisy retrieved passages and produces an incorrect answer. {\name}, using soft aggregation over multiple preference pairs, arrives at the correct answer despite the same noisy context.}
  \label{fig:roserag_prism_comparison}
\end{figure}

\section{Prompt Details}
We show the prompts we used in Fig.~\ref{fig:rationale_generation} and Fig.~\ref{fig:benchmark_prompt}.
\label{sec:prompt}
% \begin{figure*}[t]
%     \centering
%     \includegraphics[width=2\columnwidth]{latex/figures/QA_Prompt.pdf}
%     \caption {Prompt for Question Answering Benchmark} 
%     \label{fig:qa_prompt}
% \end{figure*}
\begin{figure}[h]
\centering
\begin{tcolorbox}[
    title=Rationale Generation, 
    colback=gray!5, 
    colframe=gray!60, 
]
\textbf{System Prompt:} You are a useful assistant.  

I will provide one question, several pieces of knowledge (which may be related or unrelated to the question), and the answer to the question. Please explain your reasoning process in a single paragraph consisting of no more than four sentences. If the provided knowledge is insufficient, you may make an informed guess, but do not respond with ``Unknown.''  

\vspace{0.5em}
\textbf{User Prompt:}  
Knowledge: $\mathcal{K}_q$

Question: $q$

Answer: $a^{*}$

\textbf{Assistant Prompt:} Let’s think step by step.  

\vspace{0.5em}
\textbf{Output:} $\{\text{rationale } r\}$
\end{tcolorbox}
\caption{Prompt template for rationale generation}
\label{fig:rationale_generation}
\end{figure}

\begin{figure}[h]
\centering
\begin{tcolorbox}[
  title=Few-shot Prompt for Question Answering Benchmark,
  colback=gray!5,
  colframe=gray!70,
  fonttitle=\bfseries
]

\textbf{System Prompt:}  
Y ou are a useful assistant. Y ou should answer the following question quickly and accurately based on given knowledge. Provide your reasoning in few sentences and answer in one or few words. Please begin your final answer with ’So the answer is’.

\vspace{0.8em}

\textbf{User Prompt:}  
Knowledge:Mr. Big (film) | Mr. Big is a 2007 documentary directed and produced by Tiffany Burns and edited by Alec MacNeill Richardson. The documentary examines the "Mr. Big" undercover methods used by the Royal Canadian Mounted Police (RCMP). In these operations, RCMP officers pose as gang criminals and develop a relationship with the target in the hope of eventually determining what, if any, knowledge the target has of the crime being investigated. "Mr. Big" operations have been credited with securing difficult convictions in a large number of cases, but concerns have been raised that they involve a risk of false confessions and wrongful convictions. It Might Get Loud | It Might Get Loud is a 2008 American documentary film by filmmaker Davis Guggenheim. It explores the careers and styles of prominent rock musicians Jimmy Page, The Edge, and Jack White. The film received a wide release on August 14, 2009 in the U.S. by Sony Pictures Classics.

Question:Are It Might Get Loud and Mr. Big both Canadian documentaries?

\vspace{0.4em}

\textbf{Assistant Prompt:} Let’s think step by step. Mr. Big is a 2007 documentary which examines the "Mr. Big" undercover methods used by the Royal Canadian Mounted Police. However, Are It Might Get Loud is a 2008 American documentary film. So the answer is no.

\vspace{0.8em}

\textbf{User Prompt:}  
Knowledge:Leslie H. Martinson | Leslie Herbert "Les" Martinson (January 16, 1915 – September 3, 2016) was an American television and film director. László Benedek | László Benedek (] ; March 5, 1905 – March 11, 1992; sometimes "Laslo Benedek") was a Hungarian-born film director and cinematographer, most notable for directing "The Wild One" (1953).

Question:Were László Benedek and Leslie H. Martinson both film directors?

\vspace{0.4em}

\textbf{Assistant Prompt:} Let’s think step by step. László Benedek was a Hungarian-born film director and Leslie H. Martinson was an American film director. So the answer is yes.

\vspace{0.8em}

\textbf{User Prompt:}  
Knowledge:Lucium | Lucium was the proposed name for an alleged new element found by chemist Prosper Barrière in 1896 in the mineral monazite. Later, William Crookes confirmed that the new element was actually an impure sample of yttrium. William Crookes | Sir William Crookes ( ; 17 June 1832 – 4 April 1919) was an English chemist and physicist who attended the Royal College of Chemistry in London, and worked on spectroscopy. He was a pioneer of vacuum tubes, inventing the Crookes tube which was made in 1875. Crookes was the inventor of the Crookes radiometer, which today is made and sold as a novelty item. Late in life, he became interested in spiritualism, and became the president of the Society for Psychical Research.

Question:Lucium was confimed to be an impure sample of yttrium by an English chemist who became the
president of what?

\vspace{0.4em}

\textbf{Assistant Prompt:} Let’s think step by step. Lucium was confimed to be an impure sample of yttrium by William Crookes. William Crookes is Sir William Crookes. Sir William Crookes became the president of the Society for Psychical Research. So the answer is Society for Psychical Research.

\vspace{0.8em}

\textbf{User Prompt:}  
Knowledge: $\mathcal{K}_q$

Question: $q$

\vspace{0.4em}

\textbf{Assistant Prompt:} Let’s think step by step. \{answer $a$\}

\end{tcolorbox}
\caption{Few-shot prompt template for benchmark}
\label{fig:benchmark_prompt}
\end{figure}

\end{document}